\documentclass{article}

\usepackage[main, preprint]{neurips_2026}

\usepackage{vcell}
\usepackage[utf8]{inputenc} 
\usepackage[T1]{fontenc}    
\usepackage{hyperref}
\usepackage{url}            
\usepackage{booktabs}       
\usepackage{amsfonts}       
\usepackage{nicefrac}       
\usepackage{microtype}      
\usepackage{xcolor}         
\usepackage{enumitem}
\usepackage{booktabs,multirow,makecell}
\usepackage[table]{xcolor}
\usepackage{siunitx}
\usepackage{wrapfig}
\usepackage{caption}
\usepackage[most]{tcolorbox}
\usepackage[normalem]{ulem}

\definecolor{targetbg}{HTML}{E8F4FC}
\definecolor{otherbg}{HTML}{FFF0E5}
\definecolor{supportbg}{HTML}{E8F8EF}
\definecolor{conflictbg}{HTML}{FDECEC}
\definecolor{unusedbg}{HTML}{F3F3F3}

\definecolor{correctgreen}{HTML}{159957}
\definecolor{incorrectred}{HTML}{D83A3A}
\definecolor{trainblue}{HTML}{276FBF}
\definecolor{testorange}{HTML}{D96C16}
\usepackage{graphicx} 
\usepackage{vcell}
\newcolumntype{Y}{>{\raggedright\arraybackslash}X}

\newcommand{\trainlabel}{%
  \textcolor{trainblue}{\textbf{\scriptsize TRAIN}}%
}

\newcommand{\testlabel}{%
  \textcolor{testorange}{\textbf{\scriptsize TEST}}%
}

\definecolor{claimbg}{HTML}{F7F7F7}
\definecolor{claimbar}{HTML}{4A5568}

\newtcolorbox{claimbox}{
  enhanced,
  breakable,
  colback=claimbg,
  colframe=claimbg,
  boxrule=0pt,
  leftrule=3pt,
  colbacktitle=claimbg,
  arc=2pt,
  left=7pt,
  right=7pt,
  top=6pt,
  bottom=6pt,
  before skip=8pt,
  after skip=8pt,
  borderline west={3pt}{0pt}{claimbar}
}
\usepackage{subcaption}
\usepackage{multirow}
\usepackage{amsmath,amssymb,amsfonts,amsthm,mathtools}
\usepackage{thmtools}
\newcommand{\IAPtext}{Information Abundance Paradox}
\newcommand{\IAP}{\hyperref[sec_paradox]{\IAPtext}}
\newcommand{\IAPemph}{\hyperref[sec_paradox]{\emph{\IAPtext}}}
\declaretheoremstyle[
  spaceabove=6pt,
  spacebelow=6pt,
  headfont=\bfseries,
  bodyfont=\itshape,
  headpunct={.},
  postheadspace=0.5em
]{neuripsplain}

\declaretheoremstyle[
  spaceabove=6pt,
  spacebelow=6pt,
  headfont=\bfseries,
  bodyfont=\normalfont,
  headpunct={.},
  postheadspace=0.5em
]{neuripsdef}

\usepackage{booktabs}
\usepackage[table]{xcolor}
\usepackage{array}
\usepackage{makecell}

\definecolor{headerblue}{HTML}{DCEEFF}
\definecolor{headerorange}{HTML}{FCE3C6}
\definecolor{supportgreen}{HTML}{EAF7EA}
\definecolor{conflictred}{HTML}{FCEAEA}
\definecolor{lightgray}{HTML}{F3F3F3}

\definecolor{phi4k}{HTML}{D96A80}   
\definecolor{phi128k}{HTML}{5A9ED6} 

\definecolor{olmo8k}{HTML}{a8dfc0}
\definecolor{olmo65k}{HTML}{c8b7ef}

\definecolor{sa_only}{HTML}{FFE070}

\theoremstyle{neuripsplain}
\declaretheorem[numberwithin=section]{proposition}

\theoremstyle{neuripsdef}
\declaretheorem[sibling=proposition]{definition}

\usepackage[table]{xcolor}
\usepackage{array}
\usepackage{tabularx}
\usepackage{multirow}
\usepackage{makecell}
\usepackage[most]{tcolorbox}

\definecolor{targetblue}{HTML}{9EDCFF}
\definecolor{otherorange}{HTML}{FFBC8A}
\definecolor{supportgreen}{HTML}{86F0C2}
\definecolor{conflictred}{HTML}{FFB0B0}
\definecolor{answergreen}{HTML}{00B85A}
\definecolor{answerred}{HTML}{FF3038}
\definecolor{lightgray}{HTML}{F7F7F7}

\newtcbox{\tablelabel}[1][]{%
  on line,
  arc=8pt,
  boxrule=0pt,
  left=5pt,
  right=5pt,
  top=3pt,
  bottom=3pt,
  colback=gray!15,
  #1
}

\newcolumntype{Y}{>{\centering\arraybackslash}X}

\usepackage[most]{tcolorbox}

\definecolor{takeawaybg}{HTML}{FAFAFA}
\definecolor{takeawayframe}{HTML}{D9D9D9}

\newtcolorbox{takeawaybox}{
  enhanced,
  breakable,
  colback=takeawaybg,
  colframe=takeawayframe,
  boxrule=0.4pt,
  arc=4pt,
  left=6pt,
  right=6pt,
  top=5pt,
  bottom=5pt,
  before skip=6pt,
  after skip=6pt,
  fontupper=\small,
}

\theoremstyle{plain}

\theoremstyle{definition}

\theoremstyle{remark}

\newcommand{\Risk}{\mathcal{R}}

\usepackage{booktabs}
\usepackage[table]{xcolor}
\usepackage{array}

\usepackage{setspace}
\definecolor{targetblue}{HTML}{DCEEFF}
\definecolor{otherorange}{HTML}{FCE3C6}
\definecolor{supportgreen}{HTML}{EAF7EA}
\definecolor{conflictred}{HTML}{FCEAEA}
\definecolor{unusedgray}{HTML}{F3F3F3}

\newcolumntype{C}[1]{>{\centering\arraybackslash}p{#1}}
\newcolumntype{L}[1]{>{\raggedright\arraybackslash}p{#1}}

\title{
\textit{Information Abundance Paradox:}  
Long-Context \\ Training Undermines Parametric Knowledge 
}

\usepackage{fontawesome5}

\newcommand{\artifactlink}[3]{%
  \href{#2}{{\normalfont\normalsize\ttfamily\bfseries #1~\textcolor{black}{#3}}}%
}
\author{
Arda Uzunoglu$^{1}$ \quad Benjamin Van Durme$^{1}$ \quad Daniel Khashabi$^{1}$\\
$^{1}$Department of Computer Science,\\
Johns Hopkins University, Baltimore, MD, USA\\
\texttt{\{auzunog1, vandurme, danielk\}@jhu.edu}\\[0.5em]
\artifactlink{\faGithub}{https://github.com/ardauzunoglu/information-abundance-paradox}{GitHub}
\quad
\artifactlink{\faDatabase}{https://huggingface.co/collections/ardauzunoglu/information-abundance-paradox}{Artifacts}
}

\begin{document}

\maketitle

\vspace{-1em}
\begin{abstract}
Large language models are increasingly trained and deployed with long contexts that span documents, code repositories, and interaction histories. This scaling reflects the implicit assumption that
training on longer contexts will only help the model by exposing it to richer evidence.
We challenge this view by studying how the context window shapes a model's \emph{mode of learning}, shifting it between parametric \emph{internalization} and \emph{contextualization}. 
We propose the \IAPemph, which hypothesizes that abundant relevant information in the training context can reduce the incentive to encode that information \textit{parametrically}, thereby
increasing reliance on \textit{context}. 
In pretraining with long documents,
increasing the context window improves language modeling, natural language
understanding, and closed-book MCQA only up to an intermediate optimum, after which performance consistently declines. 
In supervised fine-tuning, more task-relevant train-time context improves performance with supporting context, but reduces robustness when context is absent or misleading at test time. 
Our analysis suggests that this behavior arises when longer context provides a
lower complexity solution.
Mechanistically, training with informative context 
shifts
gradient pressure from feed-forward networks, often linked to parametric knowledge,
toward attention modules, and causal interventions show that this shift increases reliance on context during inference.
Overall, these findings support the \IAP{} and suggest that scaling toward near-infinite context is not simply a matter of supplying more data, even when high-quality long-context data is abundant.
\end{abstract}

\section{Introduction}

Large language models (LLMs) are increasingly deployed in settings where useful evidence spans long documents~\citep{longbench}, codebases~\citep{repobench, swebench}, and extended interaction histories~\citep{webarena}.
In response, training practices have extended context windows to increasingly large scales~\citep{yarn, chen2023extendingcontextwindowlarge}, in some cases reaching millions of tokens~\citep{gemini15, longrope}. Yet it remains unclear what, if anything, fundamentally limits this trajectory. This raises a basic question:
\begin{tcolorbox}[enhanced,
colback=gray!8, colframe=gray!90,
boxrule=0.4pt, arc=3pt,
left=4pt, right=4pt, top=4pt, bottom=4pt,
drop fuzzy shadow=black!20, width=0.9\linewidth, center]
\begin{center}
\textit{
Are near-infinite context models simply a matter of more long-context data?
}
\end{center}
\end{tcolorbox}
The empirical trajectory of long-context scaling reflects an implicit assumption that sufficiently plentiful long-context data will provide the signal needed for continued progress.
Therefore, when long-context training underperforms, prior work often attributes the
bottleneck to the scarcity of naturally long, high-quality documents~\citep{chen2025ladm} and focuses on data-driven recipes as the path forward~\citep{xiong2024effective, fu2024data, gao2024train, gao2025next}. 

\begin{wrapfigure}[29]{r}{0.45\textwidth}
    \vspace{-0.5em}
    \centering
    \includegraphics[width=\linewidth]{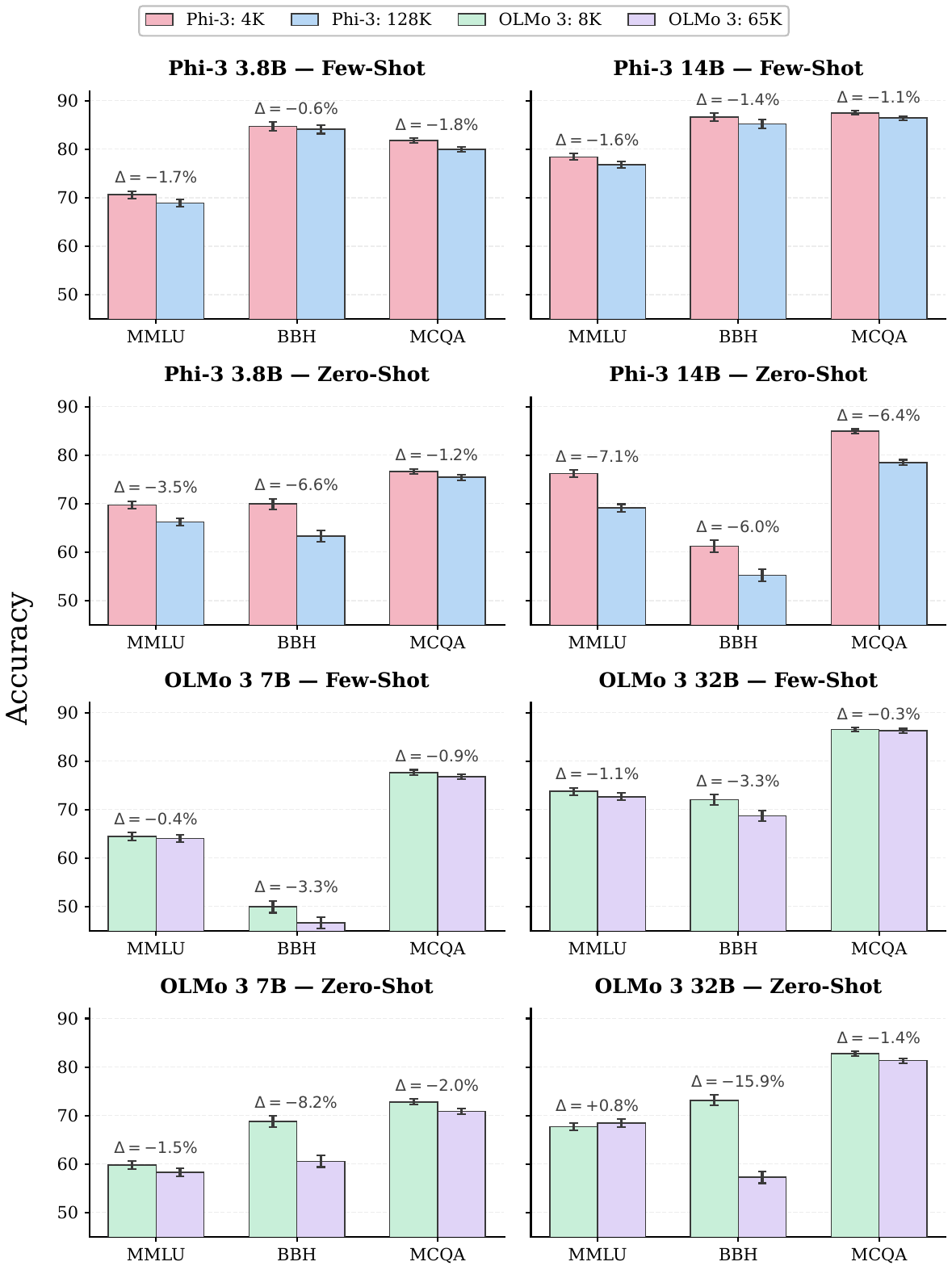}
    \caption{%
    \textbf{Longer-context Phi-3 and OLMo 3 variants underperform, providing a motivating observation.}
    Across benchmarks, \textcolor{phi128k}{128K} and \textcolor{olmo65k}{65K} variants consistently lag behind
    \textcolor{phi4k}{4K} and \textcolor{olmo8k}{8K} variants in few-shot and zero-shot evaluation settings
    (App.~\ref{app_phi3_eval}). Error bars denote 95\% confidence intervals.}
    \label{fig:phi_results}
\end{wrapfigure}

\paragraph{A motivating observation.} This data-centric view overlooks the possibility that the context window is not a neutral conduit for data. 
Phi-3~\citep{phi3} and OLMo 3~\citep{olmo3} models provide motivating examples, as their long-context (\textcolor{phi128k}{128K} for Phi-3 and \textcolor{olmo65k}{65K} for OLMo 3) variants consistently underperform their short-context (\textcolor{phi4k}{4K} for Phi-3 and \textcolor{olmo8k}{8K} for OLMo 3) counterparts in few-shot and zero-shot settings (\autoref{fig:phi_results}). Since the long-context variants are trained on \textit{more} and \textit{longer} data, this gap cannot be attributed to data quality alone~\citep{phi3, olmo3}. 
It instead raises a question: \textit{how does long-context training shape the capabilities of the models?}

\vspace{-0.5em}
\paragraph{Two modes of learning.} Training a language model by minimizing next token cross-entropy can be viewed as a form of compression~\citep{witten87,deletang2024language}, whereby reusable regularities in the training corpus are distilled into the model parameters~\citep{Grunwald_2019}.

Critically, such regularities enable two sources of predictive power: knowledge stored in the parameters~\citep{petroni2019language,roberts2020knowledge}, and information supplied in context~\citep{rag,gpt3}. This distinction suggests two \emph{modes of learning}~\citep{pan2023what,lin2024dual,ciphers}, in which a model either \emph{internalizes} task-relevant information by encoding it in its parameters or \emph{contextualizes} that information by learning to use evidence supplied in context. These two sources of predictive power can offer alternative channels for prediction, with their relative use shaped by the training dynamics~\citep{wang2023investigating} and the informativeness of the available context~\citep{anand2025dual, chan2022transformers}. 

\paragraph{Our hypothesis.}

Building on this view, we propose \IAP, which posits that information-rich long-context training can shift a model's learned strategy away from internalization and toward contextualization. As a result, when context is absent, missing, or insufficient at inference time, performance can degrade relative to models trained with less informative context, whether shorter or equally long but less relevant (\S\ref{sec_hypothesis}).

\paragraph{Our evidence.}

We test \IAP{} in pretraining and supervised fine-tuning. In pretraining, longer context windows yield \emph{inverted-U} performance, with language modeling, natural language understanding, and closed-book MCQA improving up to an intermediate optimum before performance degrades (\S\ref{subsec_pt}). In supervised fine-tuning, task-relevant context improves accuracy under informative context, but reduces robustness when context is absent or misleading (\S\ref{sec_sft}). We then provide a theoretical account of these findings, showing that longer contexts can reduce the amount of task information that must be stored in the weights to attain the same risk threshold (\S\ref{sec_theory}). Lastly, our mechanistic analyses reveal that this phenomenon arises when longer context provides a lower complexity solution (\S\ref{sec_solution_complexity}), shifts update pressure from feed-forward networks to attention modules (\S\ref{sec_grad_all}), and increases reliance on context tokens during inference (\S\ref{sec_attn_all}). Together, our findings show that continued scaling of the context window cannot be understood solely as a data problem: longer training contexts can fundamentally alter what models internalize and how strongly they depend on information supplied at inference time.

\section{The Information Abundance Paradox}
\label{sec_paradox}
\label{sec_hypothesis}

The train-time context window controls information available to a model during
training.
Increasing the context window is intended to expand the information a model can exploit, while ideally preserving short-context or context independent competence~\citep{xiong2023effectivelongcontextscalingfoundation, longrope}.

We argue that this goal can face a competing effect: when task-relevant information is available in context during training, the model can reduce loss by using it directly rather than by encoding the same information in its weights. We call this hypothesis the \emph{Information Abundance Paradox}:

\begin{takeawaybox}
\textbf{\emph{The Information Abundance Paradox.}}

When task-relevant information is made available through the training context,
the model can reduce loss by using that information directly rather than by encoding it in its parameters. Consequently, this
can shift the model's mode of learning away from parametric
internalization and toward contextualization.
\end{takeawaybox}

Here, \emph{information abundance} refers to task-relevant information available
within a context, and not the amount of training data or the entropy of the corpus. The central implication is that
context length is not merely a data-delivery mechanism. It can also determine whether the model learns to rely on information stored parametrically or on information supplied through the context.

We refer to the behavioral manifestation of this shift in models as \textbf{\emph{context addiction}}, where a model trained with informative context performs \textit{well when useful context is available} but deteriorates when that context is \emph{absent} or \emph{misleading}. Thus, context addiction provides an observable test of the \emph{Information Abundance Paradox} through robustness under absent or misleading context.

\section{Main Experiments: Testing the Information Abundance Paradox}
\label{sec_exp}

Our hypothesis concerns \textit{the amount of task-relevant information} available in the training context. We operationalize information abundance differently across the two training regimes. In pretraining, where relevance is difficult to control, we vary the length of coherent document spans as a proxy for information availability (\S\ref{subsec_pt}). In supervised fine-tuning, where relevance can be controlled directly, we fix the context length and vary the amount of task-relevant information to isolate its role (\S\ref{sec_sft}).

\subsection{Pretraining with Varying Context Length}
\label{subsec_pt}

We first test the \IAP{} in pretraining, where the context window controls how much within-document evidence is available during next token prediction. We \textit{vary} the training context window while \textit{fixing} the token budget, data, model configuration and optimization setup. We then evaluate how the training context window affects downstream task performance.

\paragraph{Model architecture.}
We pretrain language models at four scales, 20M, 55M, 259M, and 750M parameters. All models use the Llama-2 architecture and tokenizer~\citep{llama2}, RoPE~\citep{rope} for positional encoding, a standard causal attention mask, and the next token cross-entropy objective. We verify that our findings are robust to alternative positional encodings in App.~\ref{app_positional_encoding}.

\paragraph{Data.}
We train on 10B tokens drawn from a subset of Project Gutenberg~\citep{pg_corpus}. We retain documents containing at least 65536 tokens, so that increasing the training context window exposes longer coherent within-document spans rather than merely increasing cross-document packing.

\paragraph{Training setup.}
For each model scale, we sweep the training context window over seven choices,
$W \in$ \{$512, 1024, \ldots, 32768$\}, in powers of two. For each $W$, the corpus is partitioned into non-overlapping sequences of exactly $W$ tokens, with documents concatenated only as needed to fill complete sequences. Loss is computed over every token in the sequence. The global batch size is fixed at approximately 1.05M tokens, corresponding to 9537 optimization steps for every variant. Longer-context variants therefore use fewer sequences per batch. Within each model scale, all context window variants share the same model initialization, random seed, optimizer, and training hyperparameters. We train all models using the \texttt{nanotron} framework~\citep{nanotron}, with full model configurations and hyperparameters provided in App.~\ref{sec_nl_pretrain}. \textit{Our comparison is therefore token-and-update-matched, isolating context window effects under a fixed token budget}.\footnote{The comparison is not FLOP-matched, since longer windows incur higher attention cost as self-attention scales quadratically with sequence length~\citep{transformers}.}

\paragraph{Evaluation.}
We evaluate the final checkpoint of each model in the zero-shot setting across
three complementary testbeds: (i)~a language modeling suite
(e.g., LAMBADA~\citep{paperno2016lambada}, WikiSPAN~\citep{wikispan}, and Penn Treebank~\citep{ptb}), measuring next token prediction over
natural language text; (ii)~SuperGLUE~\citep{wang2020superglue}, measuring general language understanding; and
(iii)~a suite of closed-book MCQA benchmarks (e.g., ARC~\citep{clark2018think}, CommonsenseQA~\citep{talmor2019commonsense}, PIQA~\citep{bisk2019piqa}), targeting parametric knowledge. Evaluation uses each benchmark as provided, with no context beyond the task input. We report cross-entropy loss on the
language modeling suite and accuracy on SuperGLUE and MCQA tasks. For
multiple-choice benchmarks, answers are selected by length-normalized log probability of the candidate answer text tokens, excluding the option label.
We provide the complete benchmark list in App.~\ref{app_benchmarks} and motivate their inclusion in our evaluation.
\begin{figure*}[h]
    \centering
    \includegraphics[width=\linewidth]{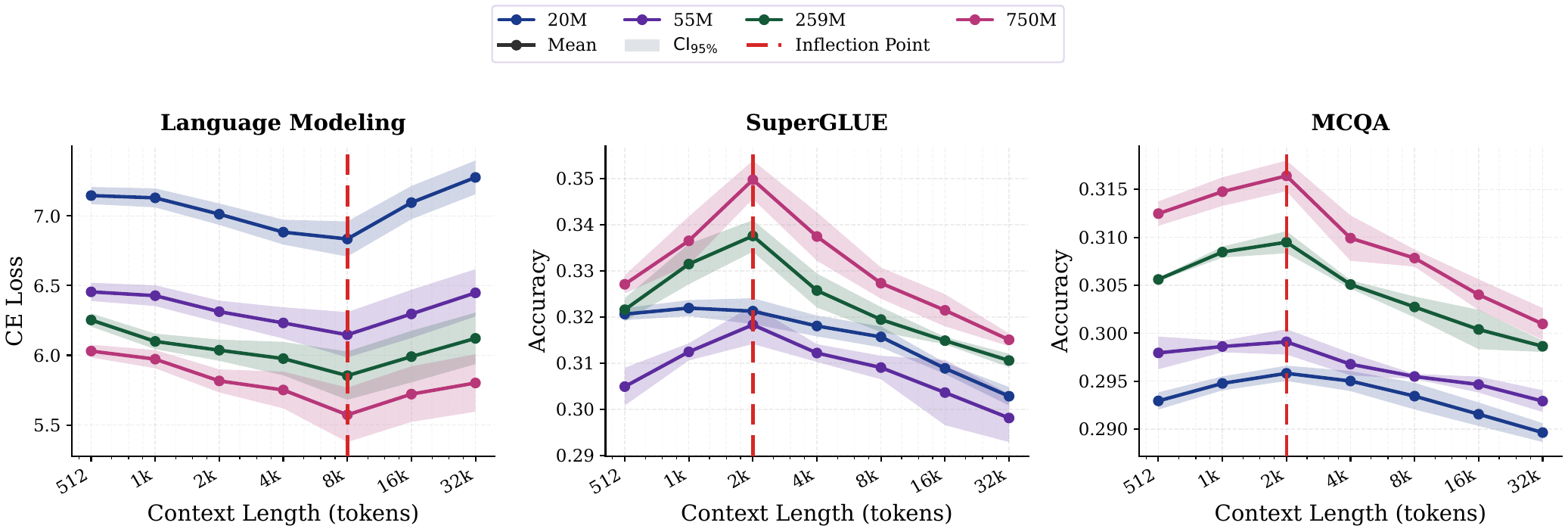}
    \caption{\textbf{Performance improves up to an intermediate optimum as pretraining context window grows.} Across model sizes, SuperGLUE and MCQA follow an inverted-U pattern, while language modeling shows a corresponding U-shaped loss curve. Dataset breakdowns are provided in App.~\ref{sec_dataset_decomp}.}
    \label{fig:pt_res}
\end{figure*}

\paragraph{Results.}
As shown in \autoref{fig:pt_res}, performance on SuperGLUE and MCQA follows an \emph{inverted-U} pattern as the pretraining context window grows, while language modeling loss follows a corresponding \emph{U-shaped} pattern. Models initially benefit from longer
contexts up to a testbed-dependent inflection point, as performance peaks
around 2048 tokens for SuperGLUE and MCQA and around 8192 tokens for language
modeling. Beyond these points, further increases in context length progressively
erode the earlier gains. The presence of an inflection point across all three testbeds suggests that the effect reflects a systematic effect of pretraining context length, with statistically significant inflections confirmed across evaluation suites and model scales (App.~\ref{app_sig_pretraining}).
Importantly, greater model
capacity does not eliminate the long-context degradation, as the same qualitative pattern persists across all four model
scales, from 20M to 750M parameters. One possible explanation for the earlier optimum on SuperGLUE and MCQA is that the optimal train-time context length depends partly on the length distribution of the evaluation instances, for which we provide supporting evidence in App.~\ref{sec_dataset_decomp}. \uline{Taken together, these results show that, under matched token budgets, context length is not a free scaling axis, since longer training windows can degrade performance beyond an intermediate optimum.}

\subsection{Supervised Fine-Tuning with Varying Context Informativeness}
\label{sec_sft}

We now test the \IAP{} in supervised fine-tuning for knowledge-rich tasks, where we \textit{fix} the context budget and \textit{vary} the task-relevant information in context. This setting isolates the role of informative train-time context while holding the task, model, and training process fixed.

\paragraph{Setup.}
We fine-tune \texttt{Qwen3-{0.6B, 1.7B, 4B, 8B, 14B}}~\citep{qwen3} with LoRA~\citep{hu2021loralowrankadaptationlarge} on four MMLU-Pro domains~\citep{mmlupro}: Health, Economics, Law, and Psychology. Since MMLU-Pro provides only test splits, we partition the examples within each domain into 80\% training and 20\% evaluation sets. For each question, we construct a fixed context budget of $n=8$ documents and vary the number of target-domain documents $k \in \{0,4,8\}$, where the remaining $8-k$ documents are drawn from the other domains (see App.~\ref{app_sft_doc_gen} for details). Therefore, we only change the domain composition of the eight documents across $k$, controlling the fraction of \emph{informative context} while holding the other aspects of fine-tuning fixed. 
We report document length statistics in App.~\ref{app_sft_doc_gen}.
We apply LoRA adapters to both attention modules and feed-forward networks, and compute the loss only on answer text tokens. Training details are provided in App.~\ref{app_sft_details}.

\paragraph{Evaluation.}
We evaluate each model on held-out questions from the domain used for its fine-tuning, without in-context demonstrations. While training varies the number of target-domain documents, evaluation fixes documents to the target domain and varies whether they support the correct or an incorrect answer (see \autoref{tab:sft-document-example} in App.~\ref{app_sft_doc_gen}). Accordingly, we conduct evaluation under three test-time context conditions: (i) \emph{supporting context}, where the prepended documents 
support the correct answer;
(ii) \emph{conflicting context}, where the prepended documents 
support an incorrect answer;
and (iii) \emph{no context}, where the model receives only the question and answer choices. We use deterministic generation without sampling and report exact-match accuracy against the ground truth answer.

\begin{figure}[h]
    \centering
    \includegraphics[width=\linewidth]{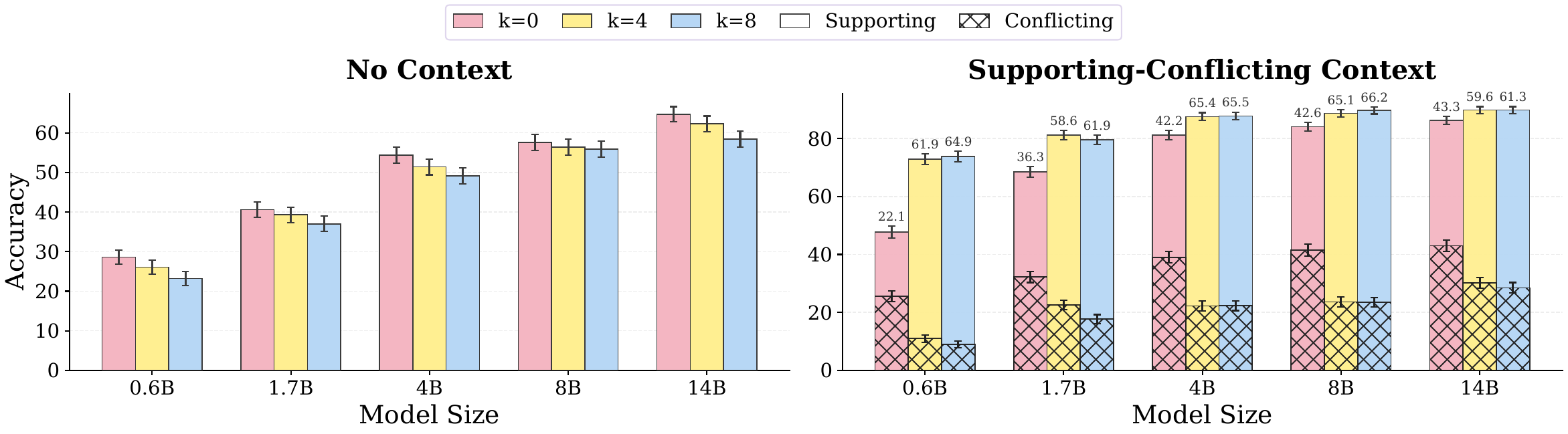}
    \caption{\textbf{Task-relevant train-time context induces context addiction.} Increasing target-domain documents from \textcolor{phi4k}{$k=0$} to \textcolor{phi128k}{$k=8$} improves Qwen3 models with supporting context, but reduces robustness without context (left) and with conflicting context (right), with the right subplot annotating the supporting--conflicting accuracy gap. Error bars denote 95\% confidence intervals.
    }
    \label{fig:sft_res}
\end{figure}

\paragraph{Results.}
\autoref{fig:sft_res} reports domain-averaged results, with
domain-specific breakdowns deferred to App.~\ref{app_sft_full_results}. The results reveal that
train-time context affects model behavior primarily through its relevance to the
task. Comparing fine-tuning runs with $k=0, 4, 8$, we
find that increasing the number of target-domain
documents strengthens
performance when supporting context is available at test time. 
However, this improvement comes with significantly reduced no-context accuracy and significantly greater vulnerability to conflicting context across model sizes and domains (App.~\ref{app_sig_sft_trends}).
This suggests that the observed shift is driven not by additional tokens alone, but by the task-relevant information they provide. \uline{Taken together, these results show that train-time context is not neutral background information, since its relevance can determine whether models internalize the task or contextualize it through the in-context evidence.}

\section{A Theoretical Account of Information Abundance Paradox}
\label{sec_theory}

Our experiments (\S\ref{sec_sft}) 
show that increasing train-time context can improve performance
when useful context is available at test time, while reducing robustness when
that context is absent or misleading. We now formalize why this tradeoff is
possible. The key idea is that context and weights can act as alternative carriers 
of task-relevant information. A longer context gives the predictor an
additional channel through which task information can be accessed, and therefore
can reduce the minimum amount of task information that must be stored in the
weights.

\paragraph{Setup.}
Let \(\tau \sim P_{\mathcal T}\) denote a discrete latent task variable indexing the data-generating distribution, treating the observed corpus as one realization from a broader population of all possible corpora.\footnote{For instance, our pretraining experiments use one filtered subset of Project Gutenberg, whereas many comparably sized subsets could have been sampled from the same underlying collection.}
For each context size \(k\), let \((X^{(k)},Y)\sim P_\tau^{(k)}\) denote an
input-output pair with input \(X^{(k)}\) of context size \(k\). 
Let \(W\sim\Pi(\cdot\mid\tau)\) denote the learned weights induced by training on data from task \(\tau\), with randomness due to data sampling and optimization. A language model with context size \(k\) defines a conditional predictor
\(q_k\), which induces predictions according to $\hat Y\sim q_k(\cdot\mid X^{(k)},W)$.
Let \(\ell:\mathcal Y\times\widehat{\mathcal{Y}}\to\mathbb R_+\) be a task loss
(e.g., binary token error) between the target \(Y\) and prediction \(\hat Y\).
We define the risk
$\Risk_k(\Pi,q_k)=\mathbb E[\ell(Y,\hat Y)],$
where the expectation is over \((X^{(k)},Y)\sim P_\tau^{(k)}\),
\(W\sim \Pi(\cdot\mid \tau)\), and
\(\hat Y\sim q_k(\cdot\mid X^{(k)},W)\). Thus, \(\Risk_k\) measures the expected task loss of the predictor
given access to context of size \(k\).
We quantify task-specific information stored in the weights by
\(I(W;\tau)\) computed under the joint distribution induced by \(P_{\mathcal T}\) and
\(\Pi(W\mid\tau)\). Equivalently,
\(I(W;\tau)=H(\tau)-H(\tau\mid W)\), so it measures the reduction in uncertainty about the
task after observing the learned weights.
While we vary the context size \(k\), we hold fixed the latent task distribution,
the underlying data-generating process, and the model architecture.

\begin{definition}[Parametric information frontier]
For risk threshold $\rho$, define
\[
\mathcal I_k(\rho)
=
\inf_{\Pi,\,q_k} I(W;\tau)
\qquad
\text{such that}
\qquad
\Risk_k(\Pi,q_k)\le \rho.
\]
Thus, \(\mathcal I_k(\rho)\) is the minimum task information that must be stored in the
weights to attain risk at most \(\rho\) when prediction has access to the input \(X^{(k)}\) corresponding to context size \(k\).
\end{definition}

Without loss of generality, we assume the inputs are nested across context sizes through the packed token
stream, where for each \(k<m\), a shorter-context sequence is an aligned subwindow of a
longer-context sequence. Accordingly, there exists a measurable projection map \(T_{k,m}\)
with \(X^{(k)} = T_{k,m}(X^{(m)})\) almost surely. Here, \(T_{k,m}\) selects the
corresponding \(k\)-token block within the \(m\)-token window, matching standard
pretraining pipelines that partition the same token stream into fixed-length
windows at different context sizes~\citep{nanotron}.

\begin{proposition}[Monotonicity of the parametric information frontier]
\label{prop_monotonicity_main}
If \(X^{(k)}=T_{k,m}(X^{(m)})\) almost surely for \(k<m\), then
\[
\mathcal I_m(\rho)\le\mathcal I_k(\rho)
\qquad \text{for all } \rho.
\]
\end{proposition}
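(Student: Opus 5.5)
The plan is to show that every feasible pair $(\Pi,q_k)$ for the context-$k$ problem induces a feasible pair for the context-$m$ problem with the same weight distribution. Since $I(W;\tau)$ depends only on $P_{\mathcal T}$ and $\Pi$, and not on the predictor, the infimum at context size $m$ is taken over a set of achievable values that contains those at context size $k$. It is therefore no larger.

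\textbf{Construction.} Fix $\rho$ and any $(\Pi,q_k)$ with $\Risk_k(\Pi,q_k)\le\rho$. Keep the same weight kernel $\Pi(\cdot\mid\tau)$. Define the context-$m$ predictor by composing with the projection:
\[
q_m(\cdot\mid x^{(m)},w) := q_k\bigl(\cdot\mid T_{k,m}(x^{(m)}),w\bigr).
\]
This is a valid Markov kernel because $T_{k,m}$ is measurable; in words, the longer-context model simply ignores everything outside the aligned $k$-token block. Architecturally, such a predictor is realizable by masking, but the definition of $\mathcal I_m$ takes the infimum over arbitrary $q_m$, so realizability is not needed.

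\textbf{Risk is preserved.} Under $P_\tau^{(m)}$, the pair $(T_{k,m}(X^{(m)}),Y)$ has the same law as $(X^{(k)},Y)\sim P_\tau^{(k)}$. This is exactly the coupling hypothesis $X^{(k)}=T_{k,m}(X^{(m)})$ almost surely, read jointly with the common target $Y$. Since $W\sim\Pi(\cdot\mid\tau)$ is drawn independently of the data given $\tau$ in both problems, the triple $(Y,\hat Y,\tau)$ has the same joint law under both constructions. Hence $\Risk_m(\Pi,q_m)=\Risk_k(\Pi,q_k)\le\rho$, so the pair $(\Pi,q_m)$ is feasible at context size $m$.

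\textbf{Conclusion.} The information term $I(W;\tau)$ is unchanged, since $\Pi$ and $P_{\mathcal T}$ are unchanged. Therefore $\mathcal I_m(\rho)\le I(W;\tau)$ for every feasible $(\Pi,q_k)$. Taking the infimum over such pairs gives $\mathcal I_m(\rho)\le\mathcal I_k(\rho)$. If the context-$k$ problem has no feasible pair, then $\mathcal I_k(\rho)=+\infty$ and the inequality holds trivially.

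The main obstacle is the risk-preservation step. It requires the nesting hypothesis to hold jointly with $Y$: the target predicted in the $k$-window must be the same target as in the aligned block of the $m$-window, so that the two data distributions are genuinely coupled rather than merely having matching input marginals. I would state this interpretation explicitly, namely that $(X^{(k)},Y)$ and $(X^{(m)},Y)$ live on a common probability space under $P_\tau$. I would also note that $W$ is conditionally independent of the test pair given $\tau$.
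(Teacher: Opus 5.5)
Your proposal is correct and follows the paper's proof essentially verbatim. You reuse the same weight channel $\Pi$, define $q_m$ by precomposing $q_k$ with $T_{k,m}$, observe that the joint law of $(X^{(k)},Y,W,\hat Y)$ (hence the risk) is unchanged and that $I(W;\tau)$ depends only on $(P_{\mathcal T},\Pi)$, and then take the infimum. Your extra remarks make explicit assumptions the paper leaves implicit without changing the argument: measurability of the composed kernel, the convention $\mathcal I_k(\rho)=+\infty$ for an empty feasible set, and reading the nesting hypothesis jointly with $Y$, with $W$ conditionally independent of the test pair given $\tau$.
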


\begin{proof}[Proof sketch]
A predictor with access to \(X^{(m)}\) can simulate any \(k\)-context predictor \(q_k\)
by first applying the projection map \(T_{k,m}\) to recover \(X^{(k)}\), and then
using the resulting input in the \(k\)-context predictor. Therefore, every
feasible weight-predictor pair for the optimization problem defining
\(\mathcal I_k(\rho)\) induces a feasible pair for the corresponding problem at
context size \(m\) with the same weight channel. Thus, enlarging the context
weakly expands the feasible set of the constrained optimization problem without
increasing the task information stored in the weights. Taking the infimum over
feasible solutions yields \(\mathcal I_m(\rho)\le \mathcal I_k(\rho)\).
\end{proof}

This is an \textit{achievability} statement, showing that longer context \textit{can} 
reduce the minimum amount of task information stored in the weights to achieve
a target risk threshold.\footnote{App.~\ref{app_theory_details} gives the full proof.}
If longer train-time context reduces the task information encoded in the
weights, then removing or corrupting context leaves the predictor with less
parametric knowledge to fall back on. This can cause performance to deteriorate
when context is absent or misleading, yielding the behavioral signature of
\emph{context addiction} (\S\ref{sec_hypothesis}).

\section{Mechanisms Behind the Information Abundance Paradox}

\subsection{Solution Complexity}
\label{sec_solution_complexity}
We conduct a controlled synthetic pretraining study to identify when longer context induces context addiction. By holding fixed the task, objective, and data-generating process while varying the number of in-context demonstrations, we test whether context addiction emerges selectively when additional demonstrations provide a lower complexity training trajectory.

\paragraph{Setup.} We
test this prediction in a synthetic pretraining setting with four
tasks: (i) unary bitwise operations, (ii) string operations, (iii)
mod10 arithmetic, and (iv) Caesar cipher. For each task, we
train language models at three scales (0.3M, 1.5M, and 7.5M
parameters), varying the number of in-context demonstrations while holding fixed
the task, objective, and data-generating process.
\autoref{fig:synth_grad_norm} reports results for the largest model, with the full model-scale breakdown provided in App.~\ref{app_comp_full_results}. Task definitions and
training details are provided in App.~\ref{app_synth_pretrain}.

\paragraph{Evaluation.}
We evaluate each model under supporting and conflicting context, paralleling the setup used in \S\ref{sec_sft}. Test-time context contains the same number of in-context demonstrations as the model observed during training.
In the
supporting condition, in-context examples follow the correct task rule, while in the
conflicting condition, they follow a consistent but incorrect rule. For example, in a bitwise-negation task, supporting examples map $\neg 0 \rightarrow 1$ and $\neg 1 \rightarrow 0$, whereas conflicting examples consistently map $\neg 0 \rightarrow 0$ and $\neg 1 \rightarrow 1$.
\textit{The
supporting--conflicting gap therefore measures reliance on supplied context
rather than on a parametrically internalized rule}.\footnote{We omit no-context
evaluation because models are trained only on final-query loss, making shorter
test sequences out-of-distribution. See App.~\ref{app_synth_pretrain} for
details.} 

\paragraph{Results.}
\autoref{fig:synth_grad_norm} (top row) shows that longer train-time context does
not affect all tasks uniformly. For bitwise and string operations, the gap grows
steadily with context length, indicating increasing dependence on the
demonstrations as conflicting-context performance approaches the random baseline. In contrast,
mod10 remains unchanged across context lengths, while
Caesar cipher shows only a localized deviation at the longest context length.

To distinguish these regimes, we use average training gradient norm as a comparative proxy for optimization path and learned function complexity, following connections between gradient magnitude, implicit regularization, and function complexity~\citep{barrett2022implicitgradientregularization,smith2021originimplicitregularizationstochastic,dherin2022neuralnetworkssimplesolutions}. For context length $k$, with parameters $\theta_t^{(k)}$ at step $t$, we compute average gradient norm
$G_k = \frac{1}{T}\sum_{t=1}^T
\left\|\nabla_{\theta}\ell(\theta_t^{(k)}; B_t)\right\|_2,$
where $B_t$ is the training batch and $\ell$ is the final-query loss. At each step, we compute the global $\ell_2$ norm before gradient clipping by concatenating the gradients of all trainable parameters.

\begin{figure}[h]
    \centering
    \includegraphics[width=\linewidth]{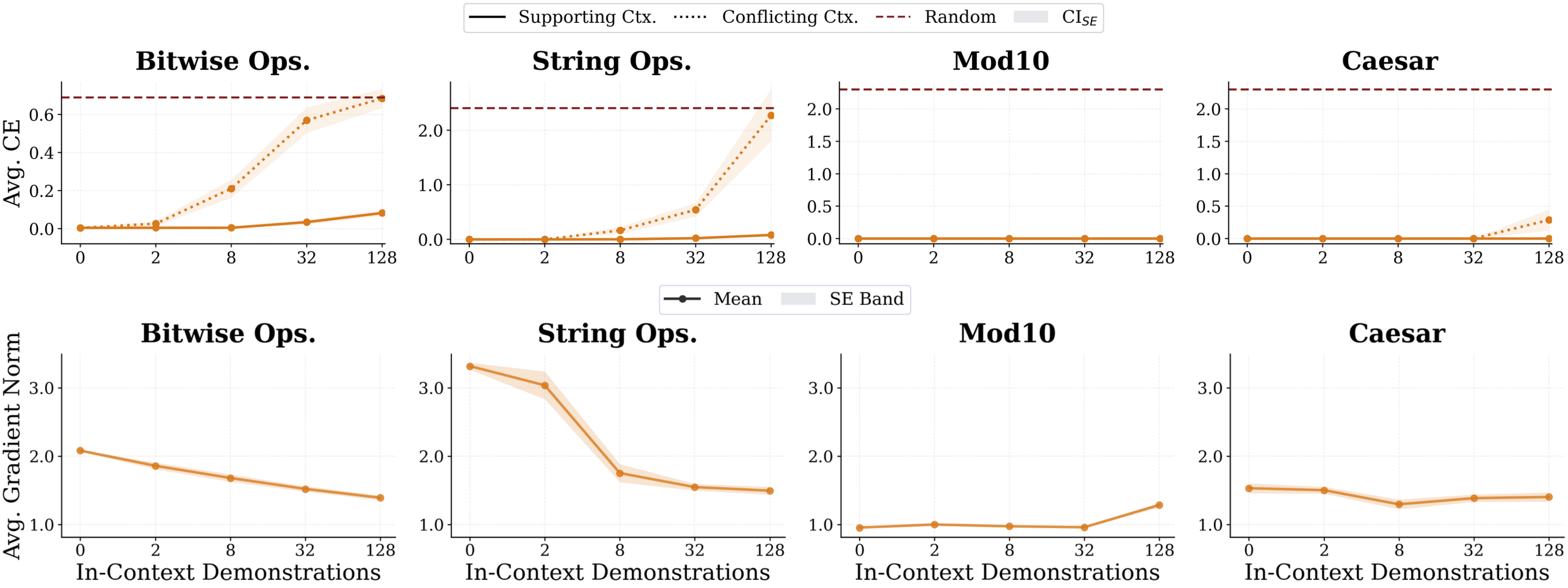}
    \caption{\textbf{Context addiction emerges when longer train-time context enables lower complexity solutions.}
    \textbf{Top row:} Supporting--conflicting gaps grow for bitwise and string tasks, but remain stable for mod10 arithmetic and Caesar cipher tasks, with each $y$-axis scaled to the task-specific random baseline.
    \textbf{Bottom row:} Average training gradient norms decrease only in the context-addicted tasks, consistent with longer context enabling simpler context-based solutions.}
    \label{fig:synth_grad_norm}
\end{figure}

\autoref{fig:synth_grad_norm} aligns the gradient norm proxy (bottom row) with the behavioral results (top row). \uline{Tasks with growing supporting--conflicting gaps also show decreasing average gradient norms as train-time context increases, whereas robust tasks show only noise-level changes or slight increases (see App.~\ref{app_sig_synth_grad} for significance tests).} This pattern supports the interpretation that context addiction emerges when demonstrations provide an easier optimization path than parametric internalization of the task rule.

\subsection{Module-Level Gradient Allocation}
\label{sec_grad_all}

Prior work suggests that feed-forward networks (FFNs) and self-attention (SA) heads play distinct roles in transformer language models, associating FFNs with factual and task-relevant knowledge stored in parameters, and SA heads with the selection and routing of information from context~\citep{geva2021ffn,dai2022knowledge,meng2023locating}.
Motivated by this distinction and prior work that uses relative gradient magnitudes to characterize transformer optimization dynamics~\citep{zhang2019improving,liu2020understanding,noci2022signal}, we analyze module-level gradient allocation using the FFN-to-SA gradient ratio to measure whether training updates concentrate more strongly in FFNs or SA heads as context length increases.

\paragraph{Module-wise gradient ratios.} 
For pretraining, we follow the setup in \S\ref{subsec_pt} and vary the context window size. For SFT, we follow the setup in \S\ref{sec_sft} and compare training with $k=8$ task-relevant documents against an additional no-context baseline that fine-tunes only on question-answer pairs.
In both setups, for each training step, we first average the gradient norm within each module type across layers, compute the FFN-to-SA ratio, and then average this ratio across training steps. Because the architecture and parameterization are fixed, the FFN and SA gradient vectors have fixed dimensionality, and, therefore, changes in the ratio reflect changes in relative gradient magnitude.
Accordingly, \textit{this ratio tracks the relative allocation of optimization pressure between FFN-mediated internalization and SA-mediated contextualization}.

\begin{figure}[h!]
    \centering
    \begin{subfigure}{0.58\linewidth}
        \centering
        \includegraphics[width=\linewidth]{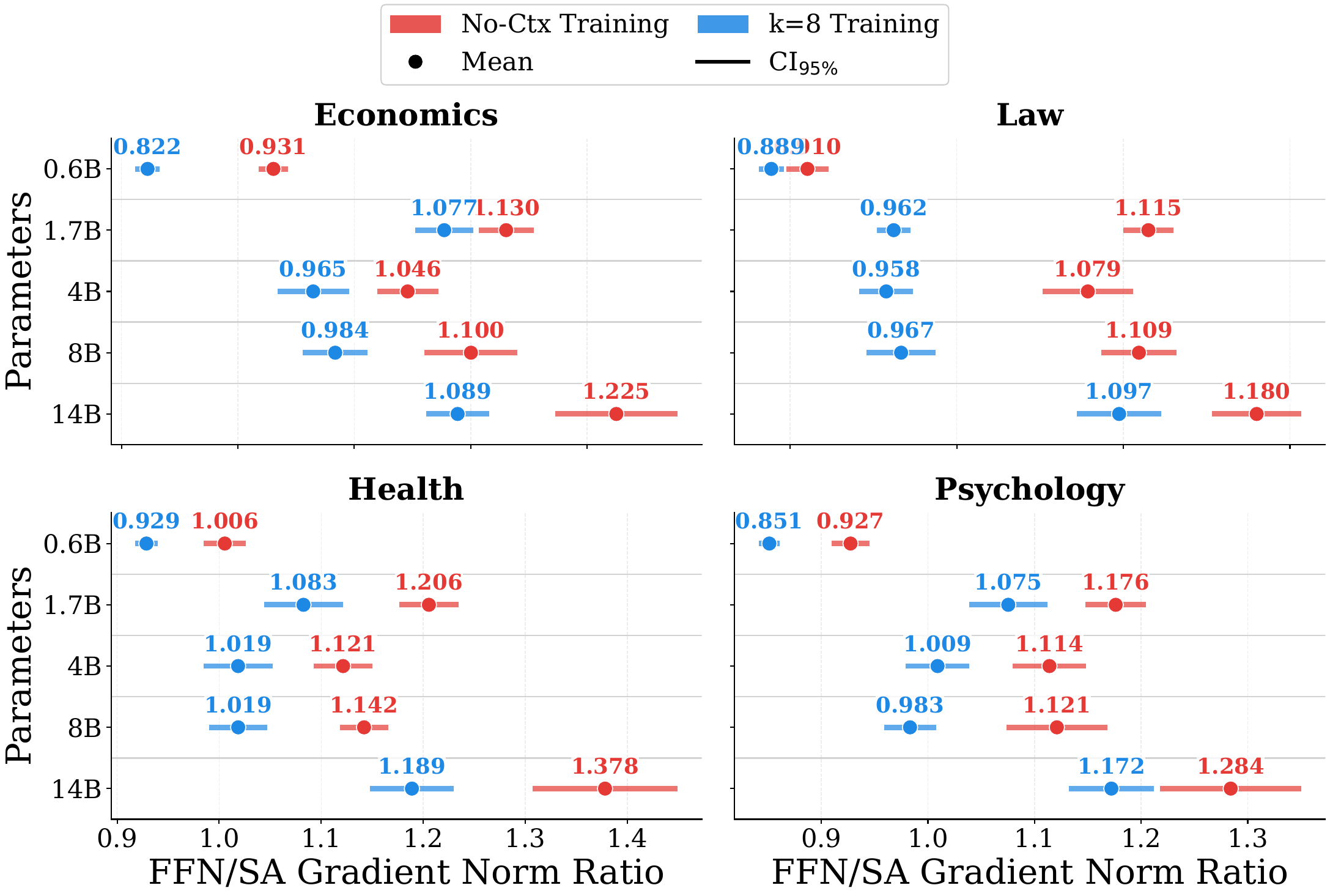}
        \caption{\textbf{SFT with task-relevant context shifts gradient pressure toward attention.} With $k=8$ target-domain documents, the FFN-to-SA gradient norm ratio is consistently lower than in no-context tuning across model sizes and domains.}
        \label{fig:grad_all}
    \end{subfigure}
    \hfill
    \begin{subfigure}{0.4\linewidth}
        \centering
        \includegraphics[width=\linewidth]{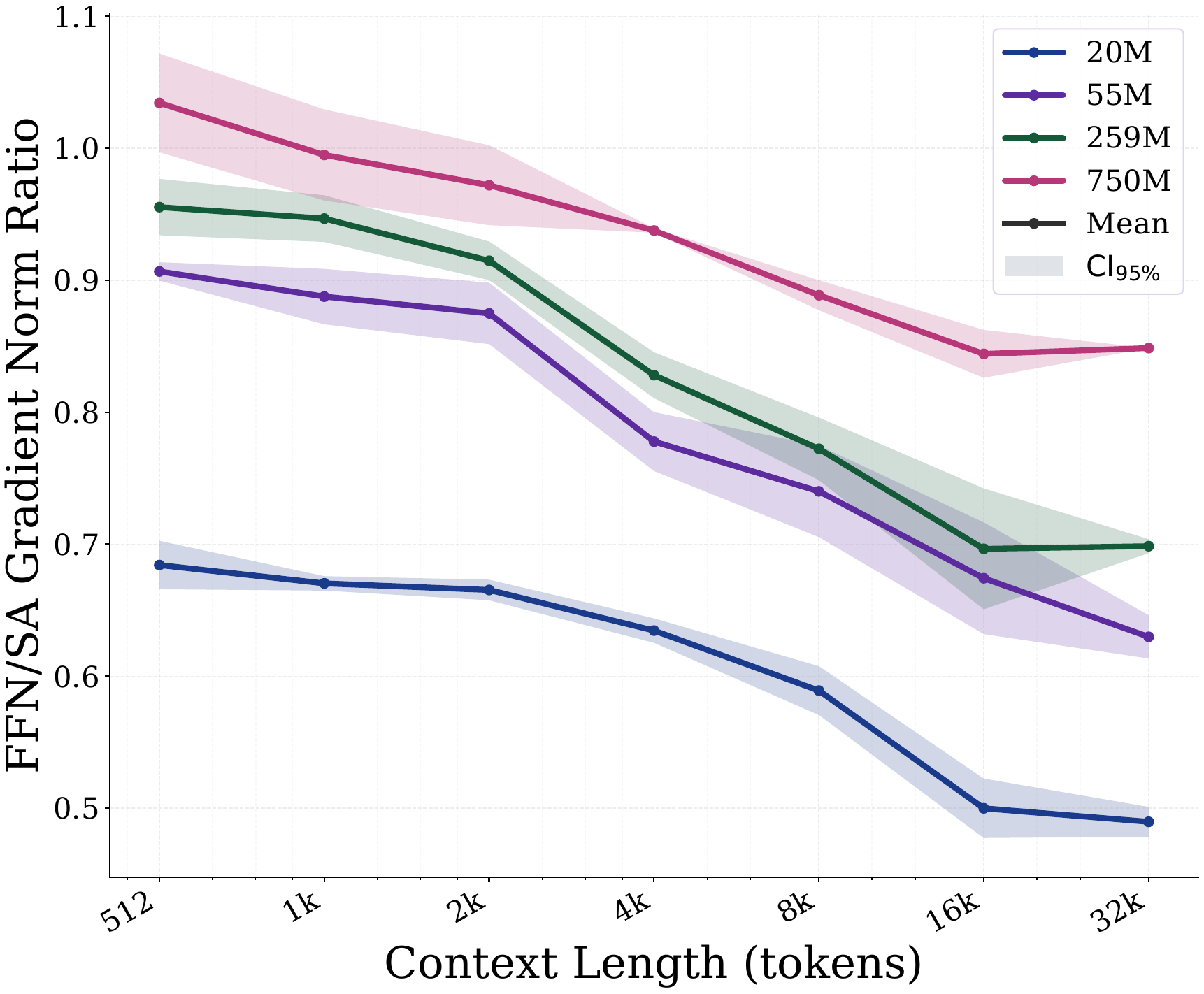}
        \caption{\textbf{Pretraining with longer context shifts gradient pressure toward attention.}
        The FFN-to-SA gradient norm ratio declines with context length.}
        \label{fig:grad_all_pt}
    \end{subfigure}
    \caption{Informative train-time context lowers the FFN-to-SA gradient norm ratio in both SFT and pretraining, consistent with increased contextualization.
    }
    \label{fig:grad_all_combined}
\end{figure}

\autoref{fig:grad_all} first shows this shift in supervised fine-tuning. When models are trained with $k=8$ target-domain documents, the FFN-to-SA gradient norm ratio is lower than in the no-context baseline across model sizes and domains. This indicates that adding task-relevant train-time context shifts relative gradient pressure away from FFNs and toward self-attention.
\autoref{fig:grad_all_pt} shows the analogous pattern in pretraining. As the training context window increases, the FFN-to-SA gradient norm ratio decreases across model scales, with the largest reductions appearing at longer context windows. Thus, both the controlled SFT setting and the pretraining setting point to the same qualitative change in module-level optimization dynamics. \uline{Taken together, these results show that context-rich training shifts gradient pressure from FFN-mediated internalization toward SA-mediated contextualization, supporting the view that longer train-time context changes the model's mode of learning. These shifts are statistically significant in 19 out of 20 SFT model-domain comparisons and at every pretraining model scale (App.~\ref{app_sig_grad_all}).}

\begin{figure}[h]
    \centering
    \includegraphics[width=\linewidth]{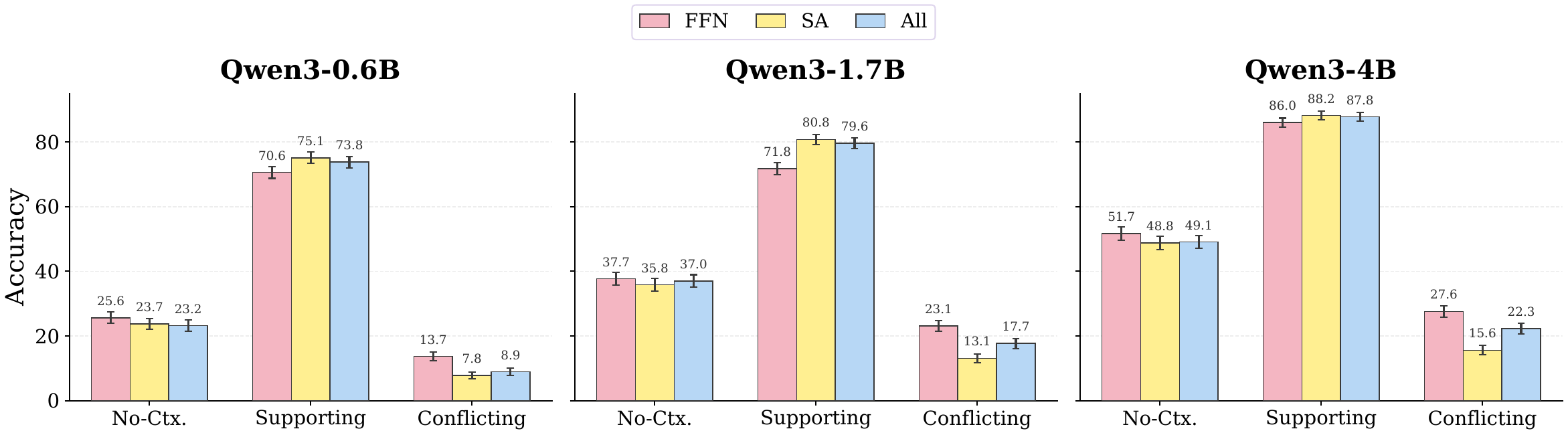}
    \caption{\textbf{FFN and SA updates causally control context reliance.} \textcolor{phi4k}{FFN-only} tuning improves no-context robustness, while \textcolor{sa_only}{SA-only} tuning strengthens supporting-context performance but increases sensitivity to conflicting context. Error bars denote 95\% confidence intervals.
    }
    \label{fig:qwen3_ffn_attn_k8_avg}
\end{figure}

\paragraph{Module-restricted fine-tuning.}
We further test this interpretation with module-restricted fine-tuning, in which models are trained with $k=8$ task-relevant documents while updating only FFN or only SA heads. To reduce compute, we perform these interventions on smaller Qwen3 models. 

\autoref{fig:qwen3_ffn_attn_k8_avg} shows that FFN-only tuning yields stronger no-context performance and smaller degradation under conflicting context. In contrast, SA-only fine-tuning achieves significantly stronger performance with supporting context and significantly greater vulnerability to conflicting context across all three model sizes (App.~\ref{app_sig_module_restricted}). \uline{This module-restricted intervention provides causal evidence for the gradient analysis, showing that FFN-directed updates improve parametric robustness, whereas SA-directed updates increase reliance on the supplied context.}

\subsection{Token-Level Attention Allocation}
\label{sec_attn_all}

Prior work has shown that transformer layers exhibit functional specialization, where early layers encode local and syntactic features, middle layers support contextual integration and retrieval, and later layers refine representations for prediction~\citep{tenney2019bert,jin2024cutting}. Motivated by this layer-wise structure, we ask whether the train-time shift in gradient allocation (\S\ref{sec_grad_all}) is reflected in inference-time attention to context tokens. We compare the fine-tuned models analyzed in \S\ref{sec_grad_all} under supporting-context evaluation. For a layer \(\ell\), head \(h\), and answer-token query position \(q\), let \(A^{\ell,h}_{q,j}\) denote the normalized attention weight from query position \(q\) to key position \(j\). We define the attention mass assigned to context tokens as \(\sum_{j \in \mathcal{C}} A^{\ell,h}_{q,j}\), where \(\mathcal{C}\) is the set of tokens in the prepended documents, excluding the question, answer choices, and generated answer tokens. For each layer and head, we average this quantity over generated answer-token positions and evaluation examples, and then report the maximum over heads within each layer. Our choice to report the maximum attention mass over heads is motivated by prior work using context-attention ratios to diagnose contextual grounding and showing that retrieval and in-context processing can be concentrated in sparse, specialized heads~\citep{voita2019analyzing,olsson2022incontext,chuang2024lookback,wu2025retrieval}.

\begin{figure}[h]
    \centering
    \includegraphics[width=\linewidth]{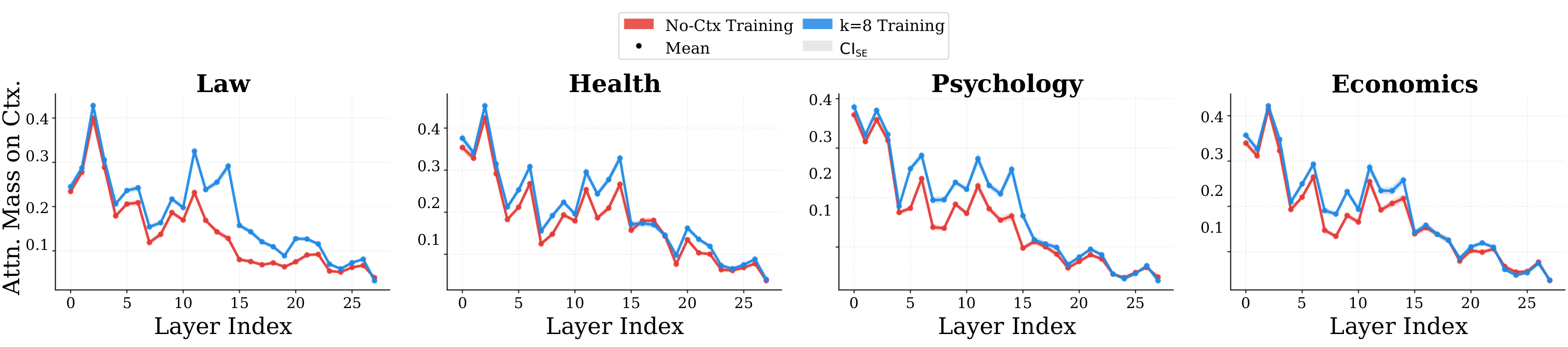}
    \caption{\textbf{SFT with task-relevant context increases inference-time attention to context.} Qwen3-1.7B fine-tuned with $k=8$ target-domain documents assign more attention mass to context tokens.
    }
    \label{fig:attn_mass_qwen1}
\end{figure}

\autoref{fig:attn_mass_qwen1} reports results for \texttt{Qwen3-1.7B}, while the additional model results in App.~\ref{app_attn_all_full_results} show the same pattern.
\uline{Models trained with task-relevant context allocate significantly more attention to context tokens at test time across all four domains (App.~\ref{app_sig_attention}).}
The shift is concentrated in middle layers, consistent with
their role in contextual integration. 
This shows that increased train-time context availability changes the test-time computation, where the resulting models rely more heavily on the supplied context at
prediction time.

\section{Related Work}
\label{sec_related_work}

\paragraph{Long-context training.}
Recent work has made long-context language modeling increasingly practical by extending usable context through positional extrapolation and RoPE scaling~\citep{alibi,chen2023extendingcontextwindowlarge,yarn,longrope}, sparse or recurrent attention~\citep{transformerxl,longformer,sparse_attn}, and retrieval or memory augmentation~\citep{retro,memorizingtransformers,infiniattention}. Other work adapts pretrained models to longer windows using continued pretraining and long-document upsampling~\citep{xiong2024effective}, domain and length balanced data mixtures~\citep{fu2024data}, and instruction tuning to shape long-context use~\citep{gao2024train,longskywork}. These methods primarily ask how models can acquire and use long-context capabilities while preserving existing performance. Short-to-long curricula recover long-context ability more
efficiently than training at maximum length
throughout~\citep{jin2023growlength,pouransari2024datasetdecomposition,skyladder},
suggesting that the training window may affect not only the attainable context
length, but also the training dynamics used to acquire it. Relatedly, prior work explains non-monotonic context scaling as a tradeoff between the predictive value of additional context and the difficulty of learning to use it with limited data and model capacity~\citep{shi2026intrinsic}. This account is complementary to ours, which focuses on how informative training context shifts learning from parametric knowledge toward context reliance.

Beyond effects on training efficiency and approximation difficulty, long-context adaptation can degrade short-context performance through representation drift and catastrophic forgetting~\citep{longred}, suggesting that naive context extension may trade one capability for another. A longer nominal window also does not by itself imply effective use of the added context, since nominal context length can overstate effective context length when long-range relative positions are undertrained~\citep{an2024doeseffectivecontextlength}. Relatedly, models systematically underuse information in the middle of long prompts~\citep{lostinthemiddle}, and long-context benchmarks reveal persistent failures in retrieval, aggregation, and reasoning over extended inputs~\citep{longbench,ruler,infinitebench,bianchi2025SmallerNeedles,byerly2026selfconsistencyfallsshortadverse}. Prior work thus primarily asks how to obtain, preserve, or evaluate long-context capability, but does not characterize how the training context window shapes the model's mode of learning. We address this gap by studying whether the training context window governs the allocation of task information
between parameters and context, thereby shifting models from parametric
internalization toward context-dependent computation.

\paragraph{Two modes of learning with context.}
A growing body of work studies how language models balance parametric and
contextual strategies for solving tasks. This distinction is often framed as
\emph{task retrieval}, in which demonstrations activate task structure already
encoded in the model's weights~\citep{pan2023what,lin2024dual}, versus
\emph{task learning}, in which the model infers a new input-output rule from
examples in context~\citep{pan2023what,ciphers}. These modes can coexist and compete during pretraining~\citep{wang2023investigating}, and their relative use
depends on the training process, task distribution, and informativeness of
the context~\citep{anand2025dual,chan2022transformers,raventos2023pretraining}. 
Empirical evidence further supports this view. Corrupted-label demonstrations
can still improve performance, suggesting that demonstrations may identify a
latent task rather than fully specify a new mapping~\citep{min2022rethinking}.
Other probes measure when models retrieve internal knowledge versus learn from
demonstrations in regression tasks~\citep{nafar2025learning},
substitution-cipher tasks~\citep{ciphers}, and many-shot prompting
settings~\citep{bertsch2025context}.
These studies establish that language models can interpolate between parametric and contextual solutions. However, they primarily vary inference-time prompts, demonstration distributions, task families, or model scale. The role of the \emph{training context window} in governing this bi-modal tradeoff remains largely unstudied, motivating our focus in this work.

We further discuss related work on \emph{language modeling as compression} in App.~\ref{app_add_rel_work}. 

\section{Discussion and Conclusion}
\label{sec_discussion}

\paragraph{Implications.}
The \IAP{} challenges the assumption that scaling toward near-infinite context requires only more data. Our findings instead show that the context window can govern the model’s mode of learning, with longer windows shifting models toward context-supplied information and away from reusable parametric knowledge. Whether this shift is beneficial depends on the application.

\paragraph{Limitations.}
Our pretraining experiments are limited up to 750M parameters in pretraining. Testing the same phenomenon at larger scale remains an important next step for determining how the location and severity of the observed inflection points change with model size, data scale, and compute.

\paragraph{Conclusion.} Long-context processing is a powerful capability, but it is not a neutral scaling axis for
language models. Our findings show that increasing train-time context can change
the model's mode of learning, shifting it from parametric internalization toward
contextualization. The takeaway is therefore not that long-context processing is
undesirable, but that it should be understood through its role in mediating the tradeoff between context use and context independent competence.

\section*{Acknowledgment}
AU is supported by JHU PURA (the Provost’s Undergraduate Research Award) and Pistritto Research Fellowship. 
DK and BVD are in part supported by Defense Advanced Research Projects Agency (DARPA) under Contract No. HR001125C0304, ONR grant (N0001424-1-2089) and JHU Provost Discovery Award (2025–2027). 
Any opinions, findings and conclusions or recommendations expressed in this material are those of the author(s) and do not necessarily reflect the views of DARPA.
We acknowledge the use of computational resources on the Johns Hopkins Data Science and AI Institute (DSAI) cluster.
We sincerely thank Zhengping Jiang and Sungwon Kim for their helpful feedback on an earlier version of this work.
 
\bibliographystyle{plainnat}
\bibliography{neurips2026}

\newpage
\appendix

\section{Evaluation of Phi-3 and OLMo 3 Models}
\label{app_phi3_eval}

We evaluate long and short-context variants from two model families, Phi-3~\citep{phi3} and OLMo 3~\citep{olmo3}. For Phi-3, we consider four instruction-tuned variants spanning two model scales, mini (3.8B parameters) and medium (14B parameters). Specifically, we evaluate \texttt{Phi-3-mini-4k-instruct}, \texttt{Phi-3-mini-128k-instruct}, \texttt{Phi-3-medium-4k-instruct}, and \texttt{Phi-3-medium-128k-instruct}. The short-context \textcolor{phi4k}{4K} variants correspond to the instruction-tuned checkpoints, whereas the long-context \textcolor{phi128k}{128K} variants are obtained through LongRoPE-based context extension with additional post-training~\citep{phi3}. For OLMo 3, we consider four base-model variants spanning two model scales, 7B and 32B parameters. Specifically, we evaluate the \textcolor{olmo8k}{8K} and \textcolor{olmo65k}{65K} variants for each scale. The short-context \textcolor{olmo8k}{8K} variants correspond to checkpoints from the end of Stage 2 pretraining, whereas the long-context \textcolor{olmo65k}{65K} variants correspond to checkpoints from the end of Stage 3 pretraining~\citep{olmo3}. We
evaluate these models on MMLU~\citep{mmlu}, BBH~\citep{bbh}, and the suite of MCQA benchmarks described in App.~\ref{app_mcqa}. We report accuracy as the evaluation metric for
all tasks, along with the binomial standard errors.

We use a generation-based evaluation protocol for both few-shot and
zero-shot settings using vLLM~\citep{vllm}. The model is prompted to generate an answer, and
we extract the predicted option letter from the generated output. For few-shot
evaluation, we follow the Phi-3 paper~\citep{phi3} and use the same
task-specific number of demonstrations:
MMLU 5, BBH 3, ARC-Easy 10, ARC-Challenge 10, CommonsenseQA 10,
BoolQ 0, OpenBookQA 10, PIQA 5, SocialIQA 5, SciQ 5, HellaSwag 5,
MedQA 2, TruthfulQA 10, and WinoGrande 5. For zero-shot evaluation, we
set \(k=0\) for all tasks. All evaluations use deterministic decoding.

We use the following multiple-choice prompt template:
\begin{quote}
\small
\texttt{Question: <question>} \\
\texttt{Options:} \\
\texttt{A. <choice A>} \\
\texttt{B. <choice B>} \\
\texttt{...} \\
\texttt{Answer:}
\end{quote}

\clearpage

\vspace{-1em}
\section{Positional Encoding Ablations}
\label{app_positional_encoding}

Our main pretraining experiments (\S\ref{subsec_pt}) use RoPE~\citep{rope} for positional encoding. Because the choice of positional encoding may influence performance across context lengths, we test whether the observed trends depend on this architectural choice. To this end, we compare RoPE with ALiBi~\citep{alibi} and LongRoPE~\citep{longrope} using the 259M parameter model under the pretraining setting of \S\ref{subsec_pt}. For ALiBi, we replace RoPE with distance dependent attention biases while retaining the same training setup. For LongRoPE, we continue pretraining the 512 token RoPE checkpoints for 2.5B tokens at each extended context length using position dependent rescaling.

As shown in \autoref{fig:positional_encoding_ablation}, all three positional encoding schemes exhibit the same qualitative trend. Language modeling performance improves up to an intermediate context length and subsequently degrades, while SuperGLUE and MCQA peak at shorter context lengths before declining. ALiBi yields somewhat weaker absolute performance than RoPE, whereas LongRoPE closely follows the RoPE results. Taken together, these experiments show that the observed context length trend is robust to the choice of positional encoding and is not an artifact of RoPE.

\begin{figure}[ht]
    \centering
    \includegraphics[width=\linewidth]{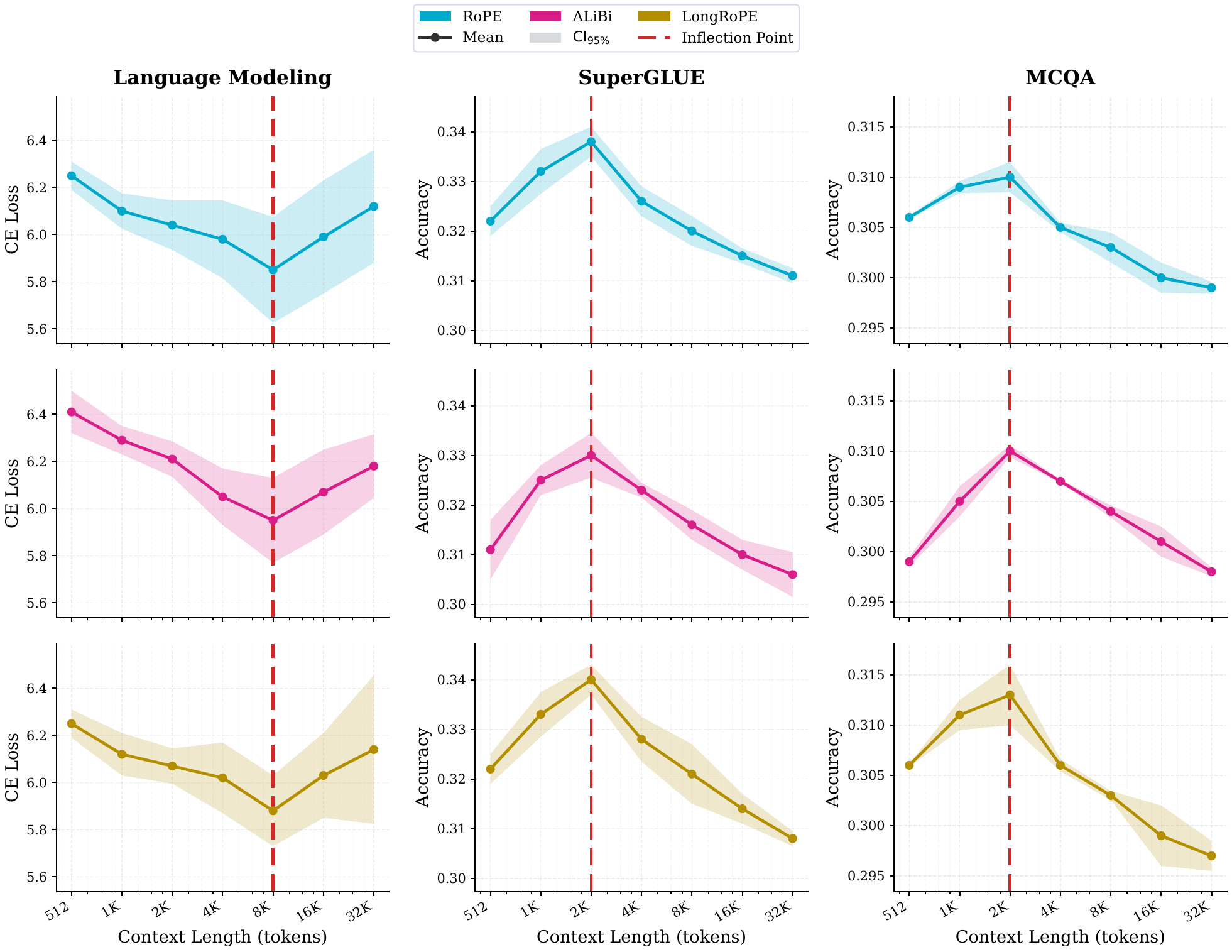}
    \caption{\textbf{The context length trend is robust to positional encoding choice.} Results for RoPE, ALiBi, and LongRoPE across language modeling, SuperGLUE, and closed-book MCQA. Each positional encoding exhibits the same qualitative pattern, where performance improves up to an intermediate context length and degrades as the training window increases further.}
    \label{fig:positional_encoding_ablation}
\end{figure}

\clearpage

\section{Training Details}

\subsection{Natural Language Pretraining}
\label{sec_nl_pretrain}

This section provides training details for the natural language pretraining experiments in \S\ref{subsec_pt}. All models follow a Llama-2 architecture~\citep{llama2} with SwiGLU activations~\citep{swiglu}, RoPE positional embeddings~\citep{rope}, RMSNorm~\citep{rmsnorm}, and a shared vocabulary of 32000 tokens (Llama-2 tokenizer). The 259M and 750M models use grouped-query attention~\citep{gqa}, while the 20M and 55M models use multi-head attention~\citep{mha}. All models have untied input and output embeddings.

\begin{wraptable}[14]{r}{0.5\textwidth}
\vspace{-12pt}
\centering
\scriptsize
\setlength{\tabcolsep}{3pt}
\begin{tabular}{@{}lrrrr@{}}
\toprule
& \textbf{20M} & \textbf{55M} & \textbf{259M} & \textbf{750M} \\
\midrule
\multicolumn{5}{@{}l}{\textit{Architecture}} \\
Hidden size          & 256   & 512   & 1024  & 1536 \\
Intermediate size    & 896   & 1792  & 3072  & 4608 \\
Layers               & 4     & 6     & 16    & 24   \\
Attn.\ heads (Q)     & 4     & 8     & 16    & 24   \\
Attn.\ heads (KV)    & 4     & 8     & 4     & 6    \\
Tied embeddings      & $\times$ & $\times$ & $\times$ & $\times$ \\
\midrule
\multicolumn{5}{@{}l}{\textit{Optimization}} \\
$\beta_1, \beta_2$   & \multicolumn{4}{r@{}}{0.9,\ 0.95} \\
$\epsilon$           & \multicolumn{4}{r@{}}{$10^{-8}$} \\
Weight decay         & \multicolumn{4}{r@{}}{0.1} \\
Peak LR              & $6 \times 10^{-4}$ & $4 \times 10^{-4}$ & $4 \times 10^{-4}$ & $3 \times 10^{-4}$ \\
Min LR               & $6 \times 10^{-5}$ & $4 \times 10^{-5}$ & $4 \times 10^{-5}$ & $3 \times 10^{-5}$ \\
\bottomrule
\end{tabular}
\captionsetup{width=0.45\textwidth}
\caption{Hyperparameters per model scale.}
\label{tab:model_arch}
\vspace{-8pt}
\end{wraptable}

All models are trained for 9537 optimization steps on 10B tokens (4 epochs of a 2.5B-token corpus), with approximately 1.05M tokens per step. Optimization uses AdamW~\citep{adamw} with gradient clipping at 1.0, a linear warmup of 2000 steps, and cosine decay thereafter. All models are trained in \texttt{bfloat16} with FlashAttention-2~\citep{flashattention2}. The 20M, 55M, and 259M models use data parallelism across two NVIDIA A100 80GB GPUs, whereas the 750M models are trained on four NVIDIA H100 80GB GPUs. Remaining hyperparameters are provided in \autoref{tab:model_arch}. For each configuration, we run multiple random seeds. We use five seeds
for the 20M model and three seeds for the 55M, 259M, and 750M models. The seeds
shared across all model scales are 42, 2026, and 1000. For the 20M model,
we additionally use seeds 9999 and 12151. All reported evaluation results
are averaged across seeds. Shaded standard error bands denote the standard
error computed across seed-level means. 

\begin{wraptable}[11]{r}{0.56\textwidth}
\centering
\vspace{-0.5em}
\scriptsize
\setlength{\tabcolsep}{2.5pt}
\resizebox{0.54\textwidth}{!}{%
\begin{tabular}{lrrrrrrr}
\toprule
 & \multicolumn{7}{c}{\textbf{Context Length (tokens)}} \\
\cmidrule(lr){2-8}
\textbf{Model} & 512 & 1K & 2K & 4K & 8K & 16K & 32K \\
\midrule
20M  & 4.77  & 4.74  & 4.91  & 5.12  & 5.52  & 6.40  & 8.17   \\
55M  & 8.23  & 8.34  & 8.49  & 9.19  & 10.53 & 13.22 & 18.54  \\
259M & 27.90 & 28.66 & 30.13 & 33.57 & 40.72 & 55.04 & 83.44  \\
750M & 35.87 & 37.26 & 35.81 & 40.85 & 53.72 & 73.36 & 118.13 \\
\bottomrule
\end{tabular}
}
\caption{Average pretraining cost in GPU hours across model sizes and context lengths.}
\label{tab:avg_gpu_hours}
\vspace{-0.5em}
\end{wraptable}

\autoref{tab:avg_gpu_hours} reports the average pretraining cost in
GPU hours for each model scale and context length. Training cost is
relatively stable at short context lengths but grows noticeably at
longer contexts, particularly beyond 8K tokens. For instance, the 259M
model requires 27.90 GPU hours at 512 tokens and 83.44 GPU hours at 32K
tokens. Other models show the same qualitative trend. Because the 20M, 55M, and 259M models are trained on A100 GPUs whereas the 750M model is trained on H100 GPUs, absolute GPU hours should not be compared directly across model scales. These measurements make explicit the compute tradeoff associated with extending the context length during pretraining.

\subsection{Supervised Fine-Tuning}
\label{app_sft_details}

This section provides training details for the SFT experiments in \S\ref{sec_sft}. We perform supervised fine-tuning using the \texttt{verl}
infrastructure~\citep{verl}. For each model and domain, we fine-tune for 5 epochs with
a learning rate of \(1\times10^{-4}\), cosine learning-rate decay, and
gradient clipping at 1.0. We use a maximum sequence length of 1024 tokens
and apply right truncation to examples exceeding this length.

We use LoRA with rank $r=64$ and scaling parameter \(\alpha=128\). Training
is performed in \texttt{bfloat16} precision with gradient checkpointing.

\subsection{Synthetic Pretraining}
\label{app_synth_pretrain}
This section provides training details for the synthetic pretraining experiments in \S\ref{sec_solution_complexity}.

\paragraph{Task format.}
Each example consists of a sequence of input-output demonstrations followed by a
final query. Models are trained to predict only the output tokens of the final
query. For a training context length $k$, the prompt contains $k$ demonstrations
sampled from the same task family, followed by one held-out query from that task.

A generic prompt has the form $(x_1, y_1), \ldots, (x_k, y_k), x_\star \mapsto y_\star,$
where $(x_i,y_i)$ are in-context demonstrations and $(x_\star,y_\star)$ is the
final query. The supervised loss is computed only on $y_\star$. This setup makes
the context useful for solving the final query while allowing us to vary the
amount of useful context independently of the task family.

\paragraph{Task families.}
We use four deterministic task families.

\begin{enumerate}[leftmargin=*, nosep]
    \item \textbf{Unary bitwise operations.}
    Inputs are 16-bit binary strings. Each task applies a unary bitwise
    operation to the input string, such as \textsc{NOT}, which maps each bit
    to its complement.\footnote{The full operation list is
    \textsc{IDENTITY}, \textsc{NOT}, \textsc{REVERSE},
    \textsc{REVERSE\_NOT}, \textsc{POPCOUNT}, \textsc{PARITY},
    \textsc{LEADING\_ZEROS}, \textsc{TRAILING\_ZEROS}, \textsc{MSB},
    \textsc{LSB}, \textsc{UPPER\_HALF}, and \textsc{LOWER\_HALF}.}
    The valid output tokens are \texttt{0} and \texttt{1}. We train each model for 25 epochs.
    
    \item \textbf{String transformations.}
    Inputs are 8-letter strings over a fixed alphabet. Each task applies a
    deterministic string transformation, such as \textsc{REVERSE}.\footnote{The
    full operation list is \textsc{IDENTITY}, \textsc{REVERSE},
    \textsc{ROTATE\_LEFT\_1}, \textsc{ROTATE\_RIGHT\_1},
    \textsc{ROTATE\_LEFT\_2}, \textsc{ROTATE\_RIGHT\_2},
    \textsc{SWAP\_HALVES}, \textsc{REVERSE\_FIRST\_HALF},
    \textsc{REVERSE\_SECOND\_HALF}, \textsc{REVERSE\_EACH\_2BLOCK},
    \textsc{REVERSE\_EACH\_4BLOCK}, and \textsc{INTERLEAVE\_HALVES}.}
    The output is another string over the same alphabet. We train each model for 25 epochs.

    \item \textbf{Digit-wise mod10 arithmetic.}
    Inputs are 5-digit strings. Each task applies a digit-wise arithmetic
    operation modulo 10. For example, a task may add a fixed digit-wise offset to
    each input digit, with all arithmetic performed modulo 10. We train each model for 10 epochs.

    \item \textbf{Caesar cipher.}
    Inputs are 5-digit strings. Each task shifts every digit by a global
    offset modulo 10. Importantly, the same offset is
    applied to all positions. We train each model for 10 epochs.
\end{enumerate}

\begin{wraptable}[15]{r}{0.45\textwidth}
\vspace{-12pt}
\centering
\scriptsize
\setlength{\tabcolsep}{3pt}
\resizebox{\linewidth}{!}{%
\begin{tabular}{@{}lccc@{}}
\toprule
& \textbf{0.3M} & \textbf{1.5M} & \textbf{7.5M} \\
\midrule
\multicolumn{4}{@{}l}{\textit{Architecture}} \\
Layers              & 4 & 6 & 8 \\
Hidden size         & 64 & 128 & 256 \\
Attn.\ heads (Q)        & 2 & 4 & 8 \\
Attn.\ heads (KV)        & 1 & 2 & 4 \\
Intermediate size   & 256 & 512 & 1024 \\
\midrule
\multicolumn{4}{@{}l}{\textit{Optimization}} \\
Optimizer           & \multicolumn{3}{c@{}}{AdamW} \\
Peak LR             & $1.0{\times}10^{-4}$ & $7.0{\times}10^{-5}$ & $5.0{\times}10^{-5}$ \\
Min LR             & \multicolumn{3}{c@{}}{$1.0{\times}10^{-5}$} \\
Batch size          & 32 & 64 & 128 \\
Training steps      & 25{,}000 & 12{,}500 & 6{,}250 \\
Warmup steps        & 1{,}250 & 625 & 312 \\
Weight decay        & \multicolumn{3}{c@{}}{0.001} \\
Gradient clipping   & \multicolumn{3}{c@{}}{1.0} \\
\bottomrule
\end{tabular}
}
\captionsetup{width=\linewidth}
\caption{Synthetic pretraining model configurations and hyperparameters.}
\label{tab:synth_model_configs}
\end{wraptable}

\paragraph{Data generation.}
For each task family, datasets are generated deterministically from the
corresponding input-output rule. We generate approximately 33K training examples
and 10K test examples per task. Training and test inputs are disjoint. Each task
uses a task-specific tokenizer with fewer than 30 tokens, including symbols for
digits or letters, operation delimiters, separators, and special tokens.
For each context length $k$, we construct examples by sampling $k$
demonstrations and one final query from the same task. 
The final query does not appear among the demonstrations. The training set size
is held fixed across context lengths so that changes in performance reflect the
amount of useful context available per example rather than the number of
optimization examples.

\paragraph{Architecture and optimization.}
We train decoder-only transformer language models at three scales: 0.3M, 1.5M,
and 7.5M parameters. All models are trained from scratch on each task family
separately. The architecture follows the same causal language modeling setup as
the natural language pretraining experiments in \S\ref{subsec_pt}, but uses
smaller widths, depths, and task-specific vocabularies.
All models are trained with a causal attention mask and next token prediction
objective, but the loss is applied only to the final-query output tokens. We vary
the number of in-context demonstrations available during training and hold the
task family, data-generating process, optimizer, and model scale fixed.

\paragraph{Evaluation conditions.}
We evaluate each trained model under two context conditions.

\begin{enumerate}[leftmargin=*, nosep]
    \item \textbf{Supporting context.}
    The context contains demonstrations generated by the correct task rule. This
    condition measures performance when the model can rely on useful in-context
    evidence.

    \item \textbf{Conflicting context.}
    The context contains demonstrations generated by a consistent but incorrect
    rule from the same task family. This condition measures whether the model follows the
    supplied context even when it conflicts with the parametrically correct rule.
\end{enumerate}

We report the gap between supporting-context and conflicting-context performance
as a measure of context addiction. A larger gap indicates that the model is more
sensitive to the correctness of the provided context and therefore relies less
robustly on a parametrically internalized task rule. 
We omit no-context evaluation in this synthetic setting because models are
trained only on final-query losses following in-context demonstrations. Removing
the demonstrations at test time changes the input format and sequence
distribution, making no-context prompts out-of-distribution for these models.

\section{Evaluation Benchmarks}
\label{app_benchmarks}
\subsection{Language Modeling Suite}
\label{app_lm_suite}

The language modeling suite consists of LAMBADA, Penn Treebank (PTB), and
WikiSPAN. LAMBADA evaluates word prediction in passages where the target word
depends on broad discourse context rather than only local syntax
\citep{paperno2016lambada}. PTB provides a standard corpus-level benchmark for
measuring next token prediction on natural text~\citep{ptb}.
WikiSPAN evaluates language modeling over time-indexed Wikipedia documents,
providing an additional testbed for factual and distributional variation in
naturally occurring text~\citep{wikispan}. These benchmarks are well
suited to our evaluation because they directly measure the pretraining
objective. They therefore test whether increasing the train-time
context window improves objective-aligned compression or degrades general
next token prediction beyond an intermediate optimum.

\subsection{SuperGLUE}
\label{app_superglue}

SuperGLUE is a suite of natural language understanding tasks that
measure capabilities such as entailment, coreference, causal reasoning, and
word-sense disambiguation~\citep{wang2020superglue}. The suite aggregates several benchmarks, including BoolQ~\citep{clark2019boolq}, CommitmentBank~\citep{jiang-de-marneffe-2019-evaluating}, COPA~\citep{roemmele2011choice}, MultiRC~\citep{khashabi2018looking}, ReCoRD~\citep{zhang2018recordbridginggaphuman}, RTE~\citep{rte}, WiC~\citep{pilehvar2019wicwordincontextdatasetevaluating}, and WSC~\citep{sakaguchi2019winograndeadversarialwinogradschema}. We
include SuperGLUE as an intermediate evaluation between language modeling and
closed-book multiple-choice question answering. Unlike the language modeling
suite, SuperGLUE probes whether representations learned during pretraining
transfer to structured language understanding tasks. Unlike the MCQA suite, it
does not primarily target factual recall. This makes it well suited for testing
whether train-time context length affects general linguistic and reasoning
competence rather than only the model's ability to store task-specific
knowledge.

\subsection{Closed-Book MCQA Suite}
\label{app_mcqa}

The closed-book MCQA suite consists of ARC-Easy and ARC-Challenge
\citep{clark2018think}, CommonsenseQA~\citep{talmor2019commonsense},
HellaSwag~\citep{zellers2019hellaswag}, OpenBookQA
\citep{mihaylov2018suit}, PIQA~\citep{bisk2019piqa}, SocialIQA
\citep{sap2019socialiqa}, MedQA~\citep{jin2020disease}, TruthfulQA
\citep{lin2022truthfulqa}, SciQ~\citep{welbl2017crowdsourcing}, and
WinoGrande~\citep{sakaguchi2020winogrande}. These benchmarks cover
complementary forms of knowledge and reasoning, including grade-school science,
general commonsense, physical commonsense, social commonsense, medical
knowledge, truthfulness, and commonsense coreference. We evaluate them in a
closed-book setting, using only the question and answer choices without
auxiliary retrieval or supporting documents. This protocol directly supports our
goal of measuring parametric knowledge. If longer-context training shifts
predictive structure from weights toward context-conditioned computation, the
effect should be visible when the model must answer without context
at test time.

\clearpage

\section{Per-Dataset Evaluation Results}
\label{sec_dataset_decomp}

We report evaluation results for each individual dataset underlying the aggregate results in \autoref{fig:pt_res}. This per-dataset analysis tests whether the observed \emph{inverted-U} pattern is driven by a small number of benchmarks or is visible across the underlying tasks.

\begin{figure}[ht]
    \centering
    \includegraphics[width=\linewidth]{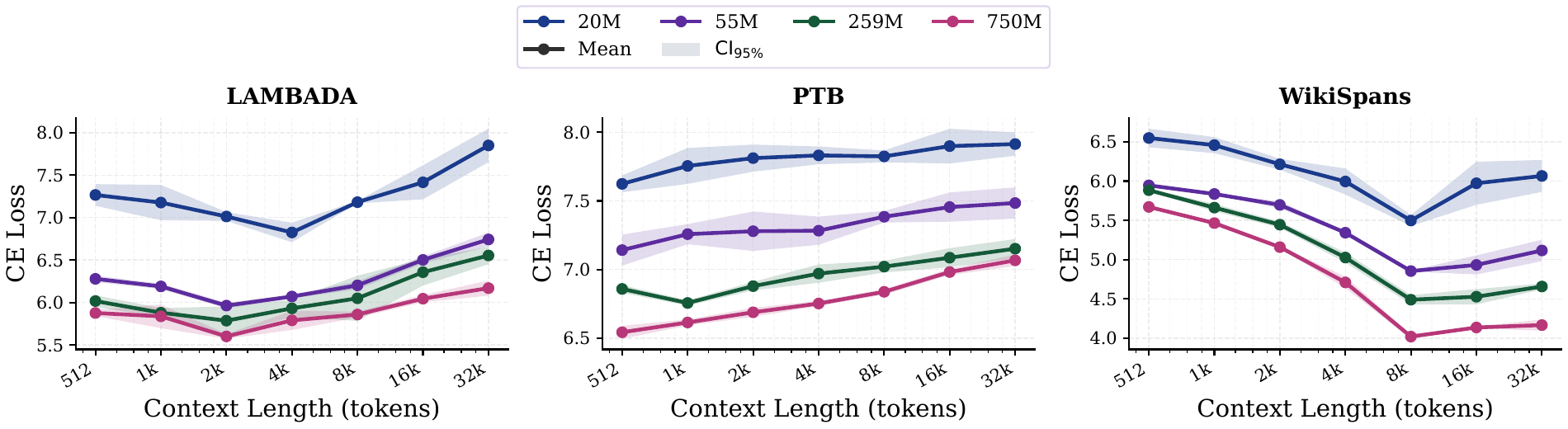}
    \caption{\textbf{Language modeling results by dataset.} Across LAMBADA, PTB,
    and WikiSPAN, longer pretraining windows initially improve next token
    prediction but eventually degrade performance at longer windows. The
    dataset-level trends mirror the aggregate language modeling curve in
    \autoref{fig:pt_res}, with the strongest performance generally occurring at
    intermediate context lengths.}
    \label{fig:pt_res_per_dataset_lm}
\end{figure}

\begin{figure}[ht]
    \centering
    \includegraphics[width=\linewidth]{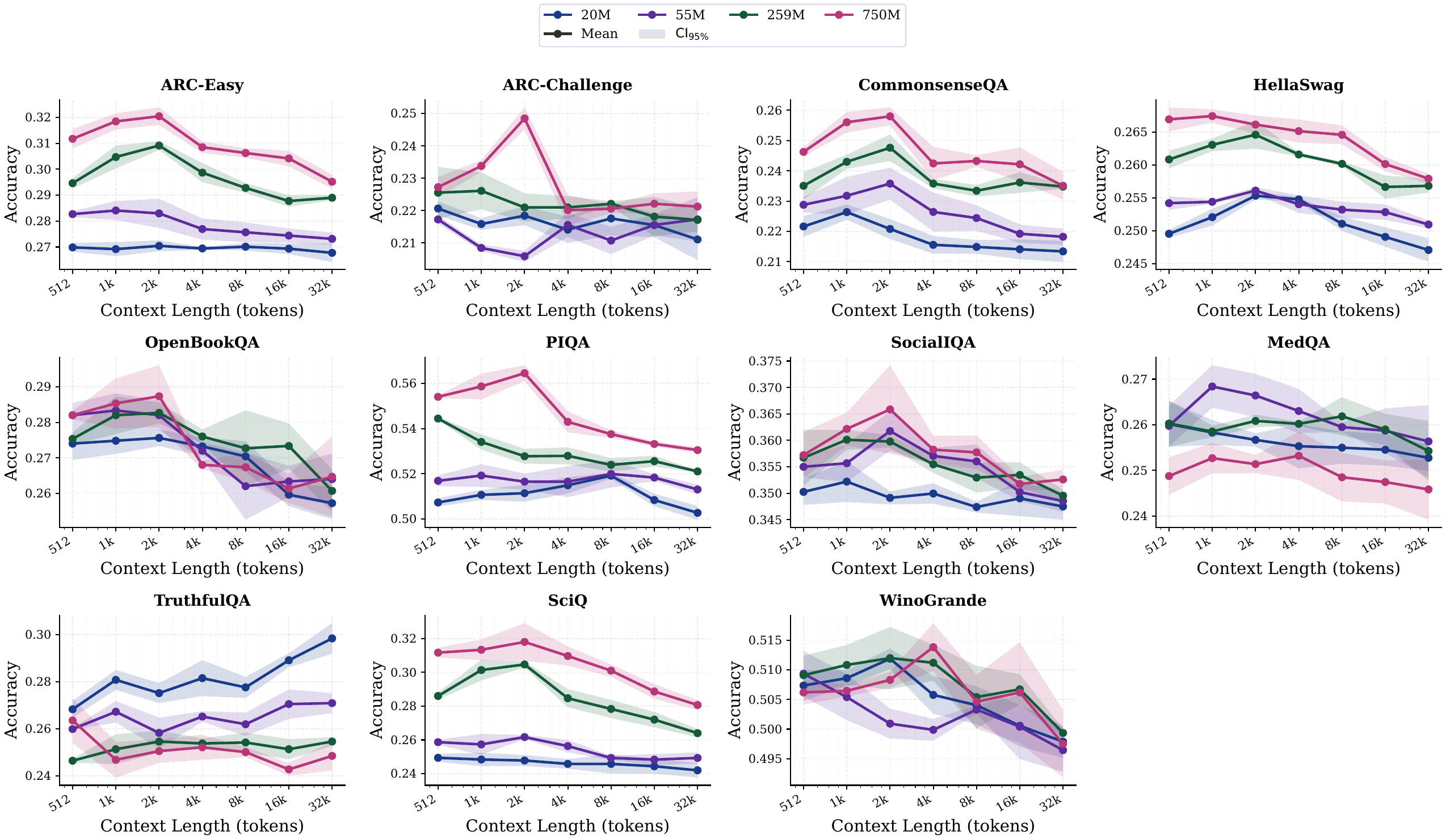}
    \caption{\textbf{Closed-book MCQA results by dataset.} Per-benchmark
    multiple-choice accuracy shows the same qualitative pattern as the aggregate
    MCQA result, where performance improves up to intermediate context
    windows and declines for longer ones. This indicates that the degradation
    is not an artifact of averaging, but appears across diverse forms of
    closed-book knowledge and reasoning.}
    \label{fig:pt_res_per_dataset_mcqa}
\end{figure}

\begin{wraptable}[11]{r}{0.6\textwidth}
\vspace{-10pt}
\centering
\small
\setlength{\tabcolsep}{4pt}
\renewcommand{\arraystretch}{1.10}
\begin{tabular}{lccccc|c}
\toprule
Dataset & $p_{05}$ & $p_{25}$ & Mean & $p_{75}$ & $p_{95}$ & Inflection Point $W$ \\
\midrule
PTB       & 7   & 18  & 28  & 37  & 54  & 512  \\
LAMBADA   & 71  & 78  & 88  & 95  & 113 & 2048 \\
WikiSPAN  & 216 & 311 & 504 & 668 & 793 & 8192 \\
\bottomrule
\end{tabular}
\caption{\textbf{Longer language modeling examples favor longer pretraining context windows.}
Token-length statistics for each language modeling dataset and the training window $W$ yielding the lowest evaluation loss. Token counts are measured with the \texttt{Llama-2-7b} tokenizer.}
\label{tab:lm_length_best_window}
\end{wraptable}
This decomposition provides suggestive evidence that the preferred train-time context length partly tracks the effective length of the evaluation distribution. As shown in \autoref{tab:lm_length_best_window}, PTB contains the shortest examples and attains its lowest loss at $W=512$, LAMBADA is longer and attains its lowest loss at $W=2048$, and WikiSPAN contains the longest examples and attains its lowest loss at $W=8192$. This trend is consistent with the hypothesis that shorter evaluation tasks saturate at shorter training windows, while longer language modeling contexts benefit from longer train-time context before the long-context degradation appears. The downstream results in \autoref{tab:benchmark_token_statistics} follow the same broad pattern: MCQA and SuperGLUE have mean lengths of $65$ and $135$ tokens, respectively, and both exhibit an inflection point at $W=2048$. Across all five benchmarks, the inflection point, therefore, is consistently on the order of $2^4$--$2^5$ times the mean example length.

\begin{wraptable}{r}{0.6\textwidth}
\centering
\small
\setlength{\tabcolsep}{4pt}
\renewcommand{\arraystretch}{1.10}
\begin{tabular}{lccccc|c}
\toprule
Dataset & $p_{05}$ & $p_{25}$ & Mean & $p_{75}$ & $p_{95}$ & Inflection Point $W$ \\
\midrule
MCQA      & 35  & 48  & 65  & 77  & 110 & 2048 \\
SuperGLUE & 72  & 101 & 135 & 160 & 221 & 2048 \\
\bottomrule
\end{tabular}
\caption{\textbf{The trend observed for language modeling benchmarks continues on MCQA and SuperGLUE.}
Token-length statistics for each benchmark and the training window $W$ at which performance begins to degrade. Both MCQA and SuperGLUE statistics are weighted equally per benchmark.}
\label{tab:benchmark_token_statistics}
\end{wraptable}
For example, $512/28\approx18$, $2048/88\approx23$, $8192/504\approx16$, $2048/65\approx32$, and $2048/135\approx15$. Although this relationship is only approximate and the selected context windows are discretized, it suggests a simple scaling rule, where performance tends to saturate, or begin to degrade, once the train-time context window exceeds the mean evaluation length by roughly one to two orders of magnitude.

Overall, the dataset-level decompositions (\autoref{fig:pt_res_per_dataset_lm} and \autoref{fig:pt_res_per_dataset_mcqa}) support the conclusion that context
length acts as a systematic training variable. While individual benchmarks vary
in their sensitivity to window size, the broad trend remains consistent:
increasing train-time context helps only up to an intermediate regime, after
which longer windows reduce both objective-aligned language modeling performance
and closed-book downstream accuracy.

\clearpage

\section{Statistical Significance Tests}
\label{app_significance_tests}

This section reports statistical significance tests for the main empirical results in
Figs.~\ref{fig:phi_results}, \ref{fig:pt_res}, \ref{fig:sft_res},
\ref{fig:synth_grad_norm}, \ref{fig:grad_all_combined},
\ref{fig:qwen3_ffn_attn_k8_avg}, and \ref{fig:attn_mass_qwen1}.
Throughout, we use a significance level of $\alpha=0.05$.

\subsection{Long and Short Context Phi-3 and OLMo 3 Variants (\autoref{fig:phi_results})}
\label{app_sig_long_context_models}

\begin{wraptable}{r}{0.45\textwidth}
\vspace{-1em}
\centering
\small
\setlength{\tabcolsep}{3.5pt}
\resizebox{\linewidth}{!}{%
\begin{tabular}{llccc}
\toprule
\textbf{Model} &
\textbf{Evaluation} &
\textbf{MMLU $p$} &
\textbf{BBH $p$} &
\textbf{MCQA $p$} \\
\midrule

\emph{Phi-3} \\

3.8B & Few-shot
& 0.0048
& 0.1920
& $1.23{\times}10^{-4}$ \\

3.8B & Zero-shot
& $4.39{\times}10^{-9}$
& $2.19{\times}10^{-14}$
& $5.86{\times}10^{-7}$ \\

14B & Few-shot
& 0.0028
& 0.0155
& $1.72{\times}10^{-5}$ \\

14B & Zero-shot
& $2.91{\times}10^{-39}$
& $2.16{\times}10^{-11}$
& $4.39{\times}10^{-32}$ \\

\midrule

\emph{OLMo 3} \\

7B & Few-shot
& 0.1204
& $6.83{\times}10^{-4}$
& 0.0189 \\

7B & Zero-shot
& 0.0071
& $4.79{\times}10^{-22}$
& $4.66{\times}10^{-5}$ \\

32B & Few-shot
& 0.0305
& $4.99{\times}10^{-80}$
& 0.2669 \\

32B & Zero-shot
& ---
& $1.57{\times}10^{-4}$
& $6.21{\times}10^{-3}$ \\

\bottomrule
\end{tabular}%
}
\caption{\textbf{Significance tests for the
comparisons in \autoref{fig:phi_results}.}
We report $p$-values from one-sided two-proportion score tests comparing
Phi-3 128K against 4K variants and OLMo 3 65K against 8K variants.}
\label{tab:sig_long_context}
\vspace{-4em}
\end{wraptable}

We compare each long-context model with its corresponding short-context variant
using a one-sided two-proportion score test in the hypothesized direction that
the long-context variant performs worse. For Phi-3, we compare the 128K variants
against their corresponding 4K variants; for OLMo 3, we compare the 65K variants
against their corresponding 8K variants. We apply Holm correction within each
model-family comparison set.

\paragraph{Phi-3.}
The 128K variants perform significantly worse than their corresponding 4K
variants in 11 of the 12 comparisons.

\paragraph{OLMo 3.}
The 65K variants perform significantly worse than their corresponding 8K
variants in 9 of the 11 comparisons. We do not conduct a test for 32B zero-shot MMLU comparison, since OLMo 3 65K performs better.

Taken together, the long-context variants are significantly worse in
20 of the 23 reported Phi-3 and OLMo 3 comparisons, providing statistical
support for the motivating pattern in \autoref{fig:phi_results}.

\subsection{Pretraining Context Length Inflection Tests (\autoref{fig:pt_res})}
\label{app_sig_pretraining}
\begin{wraptable}{r}{0.45\textwidth}
\vspace{-1em}
\centering
\small
\setlength{\tabcolsep}{4pt}
\resizebox{\linewidth}{!}{%
\begin{tabular}{lccc}
\toprule
\textbf{Model} & \textbf{Language Modeling} &
\textbf{SuperGLUE} & \textbf{MCQA} \\
\midrule
20M  & 0.016 & 0.041 & 0.022 \\
55M  & 0.032 & 0.012 & 0.013 \\
259M & 0.043 & 0.003 & 0.005 \\
750M & 0.021 & 0.004 & 0.004 \\
\bottomrule
\end{tabular}%
}
\caption{\textbf{Segmented regression inflection tests for \autoref{fig:pt_res}.} We report $p$-values for the corresponding
inflection test.}
\label{tab:sig_pretraining_inflection}
\vspace{-0.8em}
\end{wraptable}

We apply segmented regression tests to assess whether performance exhibits a statistically significant inflection as training context length increases. Degradation begins after 8K tokens for language modeling and after 2K tokens for SuperGLUE and MCQA, consistent with the intermediate optima observed in \autoref{fig:pt_res}. The inflection is statistically significant for all three evaluation suites at every model scale, indicating that the post-optimum degradation persists as model capacity increases.

\subsection{Ordered SFT Trends (\autoref{fig:sft_res})}
\label{app_sig_sft_trends}

\begin{wraptable}{r}{0.45\textwidth}
\vspace{-1em}
\centering
\small
\setlength{\tabcolsep}{4pt}
\resizebox{\linewidth}{!}{%
\begin{tabular}{lrrrr}
\toprule
\textbf{Model} & \multicolumn{2}{c}{\textbf{No Ctx.}} &
\multicolumn{2}{c}{\textbf{Supporting Ctx. -- Conflicting Ctx.}} \\
\cmidrule(lr){2-3}\cmidrule(lr){4-5}
& \textbf{$\Delta$} & \textbf{$p$} &
\textbf{$\Delta$} & \textbf{$p$} \\
\midrule
0.6B & $-5.4$ & 0.0074 & $+42.8$ & 0.0067 \\
1.7B & $-3.6$ & 0.0068 & $+25.6$ & 0.0084 \\
4B   & $-5.3$ & 0.0097 & $+23.3$ & 0.0087 \\
8B   & $-1.7$ & 0.0645 & $+23.6$ & 0.0074 \\
14B  & $-6.3$ & 0.0062 & $+18.0$ & 0.0096 \\
\bottomrule
\end{tabular}%
}
\caption{\textbf{Ordered trend tests for \autoref{fig:sft_res}.}
$\Delta$ is the change from $k=0$ to $k=8$ for the corresponding evaluation
quantity; $p$ is the one-sided ordered trend test $p$-value.}
\label{tab:sig_sft_trends}
\vspace{-1em}
\end{wraptable}

We use blocked ordered trend tests over train-time $k\in\{0,4,8\}$, treating
the four domains as equally weighted repeated blocks. For no-context accuracy,
the one-sided alternative is a decreasing trend with $k$; for the
supporting--conflicting accuracy gap, the one-sided alternative is an increasing trend. The ordered effect is significant in 9 of 10 comparisons, as no-context accuracy decreases significantly for every model except Qwen3-8B, while the
supporting--conflicting gap increases significantly for all five model sizes. These results show that the qualitative trends in \autoref{fig:sft_res} are consistent across domains rather than being driven by a small subset of them.

\subsection{Gradient Norm Trends in the Synthetic Study (\autoref{fig:synth_grad_norm})}
\label{app_sig_synth_grad}

\begin{wraptable}{r}{0.39\textwidth}
\centering
\small
\setlength{\tabcolsep}{5pt}
\begin{tabular}{lrr}
\toprule
\textbf{Task} & \textbf{Kendall $\tau$} & \textbf{$p$} \\
\midrule
Bitwise Ops. & $-0.409$ & 0.0037 \\
String Ops.  & $-0.641$ & $6.26{\times}10^{-10}$ \\
Mod10        & $+0.260$ & 0.9912 \\
Caesar       & $-0.159$ & 0.0704 \\
\bottomrule
\end{tabular}
\caption{\textbf{Gradient norm trend tests for
\autoref{fig:synth_grad_norm}.} The one-sided alternative is decreasing
gradient norm with increasing numbers of demonstrations.}
\label{tab:sig_synth_grad}
\vspace{-5em}
\end{wraptable}

We apply a one-sided Kendall rank trend test to the mean gradient norm of each
independent training run, testing whether gradient norm decreases as the number
of in-context demonstrations increases. Gradient norms decrease significantly
for bitwise operations and string operations, whereas mod10 and Caesar exhibit
nonsignificant trends. Thus, significantly decreasing gradient norm is observed
specifically for the tasks that exhibit context addiction.

\vspace{-0.15cm}
\subsection{Module-Level Gradient Allocation (\autoref{fig:grad_all_combined})}
\label{app_sig_grad_all}

\begin{wraptable}{r}{0.55\textwidth}
\vspace{-1em}
\centering
\small
\setlength{\tabcolsep}{3.5pt}
\resizebox{\linewidth}{!}{%
\begin{tabular}{lcccc}
\toprule
\textbf{Model} & \textbf{Economics} & \textbf{Law} &
\textbf{Health} & \textbf{Psychology} \\
\midrule
0.6B & $1.42{\times}10^{-51}$ & 0.0706 &
0.0014 & $3.56{\times}10^{-12}$ \\
1.7B & 0.0195 & $3.60{\times}10^{-30}$ &
0.0155 & $4.57{\times}10^{-4}$ \\
4B   & 0.0087 & $3.27{\times}10^{-6}$ &
0.0024 & $1.04{\times}10^{-6}$ \\
8B   & $4.50{\times}10^{-7}$ & $9.13{\times}10^{-19}$ &
$1.63{\times}10^{-6}$ & $5.95{\times}10^{-12}$ \\
14B  & 0.0040 & $4.52{\times}10^{-8}$ &
0.0002 & 0.0093 \\
\bottomrule
\end{tabular}%
}
\caption{\textbf{SFT gradient allocation tests for
\autoref{fig:grad_all}.} We report two-sided HAC-normal test $p$-values
for the paired difference in FFN-to-SA gradient norm ratio between $k=8$ and
no-context training.}
\label{tab:sig_grad_all_sft}
\vspace{-2em}
\end{wraptable}

\textbf{Supervised fine-tuning (\autoref{fig:grad_all}).}
At each model-domain pair, we match FFN-to-SA gradient norm ratios by training
step and test the mean paired difference between $k=8$ and no-context training
using a two-sided normal test with Newey--West heteroskedasticity- and
autocorrelation-consistent (HAC) standard errors. The $k=8$ trajectory has a
lower mean ratio in all 20 comparisons, and 19 of 20 comparisons are
significant.

\begin{wraptable}{r}{0.55\textwidth}
\vspace{-1em}
\centering
\small
\setlength{\tabcolsep}{5pt}
\resizebox{\linewidth}{!}{%
\begin{tabular}{lcc}
\toprule
\textbf{Model} & \textbf{Mean Slope per Context Doubling} & \textbf{$p$} \\
\midrule
20M  & $-0.038$ & 0.0013 \\
55M  & $-0.047$ & 0.0051 \\
259M & $-0.054$ & 0.0016 \\
750M & $-0.045$ & 0.0074 \\
\bottomrule
\end{tabular}%
}
\caption{\textbf{Pretraining gradient allocation trend tests for
\autoref{fig:grad_all_pt}.} The slope is the mean change in FFN-to-SA
gradient norm ratio per context length doubling.}
\label{tab:sig_grad_all_pt}
\vspace{-2em}
\end{wraptable}

\textbf{Pretraining (\autoref{fig:grad_all_pt}).}
We compute the mean change in the FFN-to-SA gradient norm ratio per doubling
of the training context length and test whether this slope is negative. The
ratio decreases significantly with increasing context length at every model
scale. This consistent negative trend indicates that the shift in gradient pressure toward self-attention persists as model capacity increases.

\vspace{-0.15cm}
\subsection{Module-Restricted Supervised Fine-Tuning (\autoref{fig:qwen3_ffn_attn_k8_avg})}
\label{app_sig_module_restricted}

\begin{wraptable}[12]{r}{0.55\textwidth}
\vspace{-1.3em}
\centering
\small
\setlength{\tabcolsep}{3.5pt}
\resizebox{\linewidth}{!}{%
\begin{tabular}{lrrrrrr}
\toprule
\textbf{Model} &
\multicolumn{2}{c}{\textbf{No Ctx.} ($\downarrow$)} &
\multicolumn{2}{c}{\textbf{Supporting Ctx.} ($\uparrow$)} &
\multicolumn{2}{c}{\textbf{Conflicting Ctx.} ($\downarrow$)} \\
\cmidrule(lr){2-3}
\cmidrule(lr){4-5}
\cmidrule(lr){6-7}
& $\Delta$ & $p$
& $\Delta$ & $p$
& $\Delta$ & $p$ \\
\midrule
0.6B
& $-1.9$ & 0.0733
& $+4.5$ & 0.0036
& $-5.9$ & 0.0005 \\

1.7B
& $-1.9$ & 0.0739
& $+9.0$ & 0.0001
& $-10.0$ & $3.17{\times}10^{-6}$ \\

4B
& $-2.9$ & 0.0015
& $+2.2$ & 0.0135
& $-12.0$ & $1.49{\times}10^{-7}$ \\
\bottomrule
\end{tabular}%
}
\caption{\textbf{Module-restricted fine-tuning tests for
\autoref{fig:qwen3_ffn_attn_k8_avg}.}
$\Delta$ denotes SA-only minus FFN-only accuracy, and $p$ is the
one-sided two-proportion score test $p$-value. Arrows indicate the
hypothesized direction for SA-only relative to FFN-only.}
\label{tab:sig_module_restricted}
\vspace{-0.8em}
\end{wraptable}

For each model size and evaluation setting, we use a one-sided two-proportion
score test in the hypothesized direction, using the independence approximation.
The predicted direction holds in all nine comparisons, with seven of nine being
significant. SA-only tuning is significantly better than FFN-only tuning under
supporting context and significantly worse under conflicting context for all
three models. Under no-context evaluation, SA-only tuning is significantly
worse only for Qwen3-4B.

\begin{wraptable}{r}{0.33\textwidth}
\vspace{-1.125em}
\centering
\small
\setlength{\tabcolsep}{5pt}
\begin{tabular}{lcr}
\toprule
\textbf{Domain} & \textbf{Layers} & \textbf{$p$} \\
\midrule
Law        & 4--16 & $1.4{\times}10^{-4}$ \\
Health     & 4--16 & 0.0040 \\
Psychology & 4--16 & $5.4{\times}10^{-4}$ \\
Economics  & 4--16 & 0.0180 \\
\bottomrule
\end{tabular}
\caption{\textbf{Permutation tests for \autoref{fig:attn_mass_qwen1}.} We report cluster-level permutation $p$-value after max-cluster
correction over layers.}
\label{tab:sig_attention}
\vspace{-0.8em}
\end{wraptable}
\vspace{-0.15cm}
\subsection{Inference-Time Attention to Context (\autoref{fig:attn_mass_qwen1})}
\label{app_sig_attention}
For each domain, we use a one-sided cluster-based permutation test to determine
whether $k=8$ SFT increases attention to context tokens relative to no-context
SFT. For each question, the layer-wise difference curve is sign-flipped during
permutation, thereby preserving dependence across layers. Positive differences
across contiguous layers are grouped into clusters, and the maximum cluster
statistic controls for multiple comparisons over layers. The $k=8$ checkpoint
assigns significantly greater attention mass to context tokens in layers 4--16
across all four domains.

\clearpage

\section{Context Construction for Supervised Fine-Tuning}
\label{app_sft_doc_gen}

We construct the supervised fine-tuning data from four MMLU-Pro domains, namely Health,
Economics, Law, and Psychology. Since MMLU-Pro provides only a test split, we
create an 80/20 split within each domain. The 80\% portion is used for
supervised fine-tuning, and the remaining 20\% portion is reserved for held-out
evaluation.

For each question, we generate two sets of documents. The first set is
\textbf{supporting context}, which is intended to support the correct answer. The
second set is \textbf{conflicting context}, which is intended to plausibly steer the
model toward an incorrect answer. We generate both sets using \texttt{gemini-3.1-flash-lite-preview}~\citep{googledeepmind2026gemini31flashlite} through the
Gemini API. We use the model's default generation hyperparameters, with
\texttt{temperature=1.0}, \texttt{topP=0.95}, \texttt{topK=64}, and
\texttt{candidateCount=1}. The maximum output length is set by the model limit
of 65536 tokens. The system prompt used for generation is provided below:

\begin{quote}
\small
\textbf{System prompt.}

You are an expert question analyst. Given a multiple choice question, you will
generate context and reasoning in JSON format only, with no extra text or
markdown.

Your output must be a JSON object with exactly these four keys.

\begin{itemize}[leftmargin=*]
    \item \texttt{supporting\_context}: Exactly 8 sentences of factual background
    information that directly supports arriving at the correct answer. Each
    sentence must be fully self-contained and independent. No sentence should
    reference, depend on, or follow logically from any other sentence. Avoid
    discourse connectives like ``furthermore'', ``however'', ``therefore'', and
    ``this means''.

    \item \texttt{conflicting\_context}: Exactly 8 sentences that sound plausible and
    relevant but subtly steer reasoning toward the second most likely incorrect
    answer, without explicitly mentioning any answer choice. Each sentence must
    be fully self-contained and independent. No sentence should reference,
    depend on, or follow logically from any other sentence. Avoid discourse
    connectives.

    \item \texttt{trick\_answer}: The single answer option letter, such as
    \texttt{A}, \texttt{B}, or \texttt{C}, that the \texttt{conflicting\_context} is
    designed to steer toward.

    \item \texttt{reasoning}: A concise 1 to 3 sentence explanation of why the
    correct answer is correct.
\end{itemize}
\end{quote}

\begin{table*}[bh]
\centering
\small
\setlength{\tabcolsep}{5pt}
\renewcommand{\arraystretch}{1.15}

\begin{tcolorbox}[
  width=\textwidth,
  colback=black!2,
  colframe=black!45,
  boxrule=0.45pt,
  arc=2pt,
  left=6pt,
  right=6pt,
  top=4pt,
  bottom=4pt,
  before skip=0pt,
  after skip=5pt
]
\textbf{Question:}
Which of the following is most likely to produce symptoms similar to anxiety?

\textbf{Options:}
\textcolor{correctgreen}{\textbf{A) Hyperthyroidism}}
\qquad
\textcolor{incorrectred}{\textbf{B) Addison's disease}}
\end{tcolorbox}

\begin{tabularx}{\textwidth}{
  >{\centering\arraybackslash}p{0.8cm}
  |Y|Y
}
\toprule
&
\cellcolor{targetbg}
\centering\textbf{Target Domain (Psychology)}
&
\cellcolor{otherbg}
\centering\textbf{Other Domain (e.g. Economics)}
\tabularnewline
\midrule

\cellcolor{supportbg}
\vcell{%
  \centering
  \rotatebox[origin=c]{90}{\textbf{Supporting}}%
}
&
\vcell{%
  \textit{Hyperthyroidism involves an overactive thyroid gland that
  produces an excess of thyroid hormones.}

  \par\smallskip
  \trainlabel{} $+$ \testlabel
}
&
\vcell{%
  \textit{Price leadership models characterize industries where firms
  adopt the pricing strategy set by a dominant entity.}

  \par\smallskip
  \trainlabel
}
\tabularnewline[-\rowheight]

\cellcolor{supportbg}
\printcellmiddle
&
\printcelltop
&
\printcelltop
\tabularnewline
\midrule

\cellcolor{conflictbg}
\vcell{%
  \centering
  \rotatebox[origin=c]{90}{\textbf{Conflicting}}%
}
&
\vcell{%
  \textit{Long-term systemic malaise may be mistaken for the
  hyper-arousal symptoms of a generalized anxiety state.}

  \par\smallskip
  \testlabel
}
&
\cellcolor{unusedbg}
\vcell{%
  \mbox{}

  \par\smallskip
  \textcolor{black!50}{\textbf{UNUSED}}
}
\tabularnewline[-\rowheight]

\cellcolor{conflictbg}
\printcellmiddle
&
\printcelltop
&
\cellcolor{unusedbg}
\printcelltop
\tabularnewline
\bottomrule
\end{tabularx}

\caption{
\textbf{Illustration of the SFT context conditions.}
\textcolor{trainblue}{Training} uses supporting documents from the target or
other domains. \textcolor{testorange}{Testing} uses target-domain documents
that either support the correct answer or conflict with it. This allows us to vary the informativeness of context during training, and evaluate how it may impact models' dependency on context during testing.
}
\label{tab:sft-document-example}
\end{table*}

\autoref{tab:sft-document-example} summarizes these context conditions. During training, supporting documents may come from either the target domain or the paired source domain, whereas evaluation uses only target-domain documents and varies whether they support the correct answer or conflict with it. The conflicting other-domain condition is not used, since the evaluation is intended to isolate the effect of context correctness within the target domain.

Using these generated documents, we construct train-time context-relevance conditions under a fixed budget of eight documents. The variable $k\in\{0,4,8\}$ denotes the number of target-domain
documents, while the remaining $8-k$ documents are drawn from a paired source
domain. We pair Law with Health and Economics with Psychology. Thus, Law uses
Health as the source of irrelevant documents, Health uses Law, Economics uses
Psychology, and Psychology uses Economics. The $k=8$ condition contains only
target-domain context, the $k=4$ condition contains mixed context, and the
$k=0$ condition contains only irrelevant context. We also include a no-context
condition in which all prepended documents are removed. All conditions use the
same question set and answer supervision.

\begin{wraptable}[17]{r}{0.44\textwidth}
\vspace{-10pt}
\centering
\small
\setlength{\tabcolsep}{3pt}
\renewcommand{\arraystretch}{1.10}
\begin{tabular}{lllccc}
\toprule
Split & Domain & Context & $p_{25}$ & $p_{50}$ & $p_{75}$ \\
\midrule
\multirow{8}{*}{Train}
& Econ. & Supp. & 152.0 & 170.5 & 186.0 \\
& Econ. & Conf. & 148.0 & 163.0 & 176.0 \\
& Health & Supp. & 174.2 & 201.0 & 225.0 \\
& Health & Conf. & 166.0 & 186.0 & 204.0 \\
& Law & Supp. & 173.0 & 198.0 & 219.0 \\
& Law & Conf. & 167.0 & 186.0 & 202.0 \\
& Psych. & Supp. & 157.0 & 176.0 & 192.2 \\
& Psych. & Conf. & 153.0 & 168.0 & 183.0 \\
\midrule
\multirow{8}{*}{Test}
& Econ. & Supp. & 158.0 & 170.5 & 186.2 \\
& Econ. & Conf. & 154.8 & 163.0 & 176.0 \\
& Health & Supp. & 181.0 & 201.0 & 218.0 \\
& Health & Conf. & 173.0 & 190.0 & 208.0 \\
& Law & Supp. & 182.0 & 201.0 & 216.5 \\
& Law & Conf. & 175.0 & 190.0 & 203.0 \\
& Psych. & Supp. & 161.5 & 179.0 & 193.5 \\
& Psych. & Conf. & 155.0 & 168.0 & 186.0 \\
\bottomrule
\end{tabular}
\caption{\textbf{SFT contexts have similar total token lengths.}
Total tokens in the prepended context per question, measured with the
\texttt{Llama-2-7b} tokenizer.
}
\label{tab:sft_context_token_lengths}
\end{wraptable}
As shown in \autoref{tab:sft_context_token_lengths}, supporting and conflicting contexts occupy the same few-hundred-token regime across domains and splits. On the held-out test split, the interquartile ranges largely overlap, where supporting contexts have medians between 170.5 and 201.0 tokens, while conflicting contexts have medians between 163.0 and 190.0 tokens. Supporting contexts are slightly longer overall, but their interquartile ranges remain comparable to those of conflicting contexts within each domain. Thus, the SFT comparison primarily varies the relevance and composition of a fixed eight-document context budget, rather than varying the context length.

\clearpage

\section{Full Results}
\label{app_full_results}

\subsection{Supervised Fine-Tuning Full Results}
\label{app_sft_full_results}

\autoref{tab:full_sft_res} reports the complete supervised fine-tuning
results underlying \autoref{fig:sft_res}. The table breaks down performance
by domain, model size, train-time context condition, and test-time context
condition. Across domains, the same qualitative pattern appears. Increasing the
amount of target-domain context during fine-tuning improves accuracy when
supporting context is available at test time. However, it generally reduces
robustness when context is absent or when the supplied context conflicts with the
correct answer. These domain-level results show that the context-reliance trend in
\autoref{fig:sft_res} is not driven by a single domain or model size.
Instead, it appears consistently across Health, Economics, Law, and Psychology,
as well as across the Qwen3 model scales.

\begin{table*}[h]
\centering
\small
\setlength{\tabcolsep}{4pt}
\renewcommand{\arraystretch}{1.12}
\sisetup{
  table-number-alignment=center,
  round-mode=places,
  round-precision=1
}
\resizebox{\textwidth}{!}{%
\begin{tabular}{
ll
*{25}{S[table-format=2.1]}
}
\toprule
&
& \multicolumn{5}{c}{\textbf{Qwen3-0.6B}}
& \multicolumn{5}{c}{\textbf{Qwen3-1.7B}}
& \multicolumn{5}{c}{\textbf{Qwen3-4B}}
& \multicolumn{5}{c}{\textbf{Qwen3-8B}}
& \multicolumn{5}{c}{\textbf{Qwen3-14B}} \\
\cmidrule(lr){3-7} \cmidrule(lr){8-12} \cmidrule(lr){13-17} \cmidrule(lr){18-22} \cmidrule(lr){23-27}
\textbf{Domain} & \textbf{Eval $\downarrow$ -- Train $\rightarrow$}
& \multicolumn{1}{c}{No-SFT} & \multicolumn{1}{c}{No Ctx.} & \multicolumn{1}{c}{$k=0$} & \multicolumn{1}{c}{$k=4$} & \multicolumn{1}{c}{$k=8$}
& \multicolumn{1}{c}{No-SFT} & \multicolumn{1}{c}{No Ctx.} & \multicolumn{1}{c}{$k=0$} & \multicolumn{1}{c}{$k=4$} & \multicolumn{1}{c}{$k=8$}
& \multicolumn{1}{c}{No-SFT} & \multicolumn{1}{c}{No Ctx.} & \multicolumn{1}{c}{$k=0$} & \multicolumn{1}{c}{$k=4$} & \multicolumn{1}{c}{$k=8$}
& \multicolumn{1}{c}{No-SFT} & \multicolumn{1}{c}{No Ctx.} & \multicolumn{1}{c}{$k=0$} & \multicolumn{1}{c}{$k=4$} & \multicolumn{1}{c}{$k=8$}
& \multicolumn{1}{c}{No-SFT} & \multicolumn{1}{c}{No Ctx.} & \multicolumn{1}{c}{$k=0$} & \multicolumn{1}{c}{$k=4$} & \multicolumn{1}{c}{$k=8$} \\
\midrule

\multirow{3}{*}{Law}
& No Ctx.          & 15.3 & 25.7 & 17.5 & 14.2 & 13.7 & 21.9 & 27.3 & 27.3 & 25.7 & 23.0 & 25.1 & 41.0 & 38.8 & 31.1 & 29.0 & 31.7 & 36.6 & 37.7 & 33.9 & 36.6 & 38.3 & 48.1 & 46.4 & 43.2 & 38.8 \\
& Supporting Ctx.  & 41.0 & 36.1 & 35.5 & 65.0 & 71.0 & 57.9 & 49.7 & 51.9 & 72.1 & 75.4 & 65.0 & 65.6 & 68.3 & 80.9 & 83.1 & 73.8 & 75.4 & 76.5 & 84.2 & 85.8 & 76.5 & 78.7 & 77.6 & 85.8 & 86.3 \\
& Conflicting Ctx. & 10.9 & 25.1 & 19.7 &  8.2 &  6.0 & 13.1 & 21.3 & 22.4 &  6.6 &  4.9 & 14.2 & 21.9 & 23.0 &  4.4 &  4.9 & 14.2 & 17.5 & 20.2 &  4.4 &  3.8 & 13.1 & 21.3 & 24.0 &  5.5 &  4.4 \\
\midrule

\multirow{3}{*}{Health}
& No Ctx.          & 20.6 & 25.5 & 24.8 & 23.4 & 16.3 & 39.7 & 36.9 & 37.6 & 36.9 & 35.5 & 54.6 & 61.0 & 58.2 & 56.7 & 53.9 & 63.1 & 63.1 & 62.4 & 63.8 & 61.0 & 73.0 & 68.8 & 68.1 & 68.8 & 66.7 \\
& Supporting Ctx.  & 75.9 & 58.9 & 48.2 & 80.9 & 83.7 & 86.5 & 77.3 & 83.0 & 90.1 & 87.2 & 94.3 & 90.8 & 92.2 & 95.7 & 95.7 & 96.5 & 93.6 & 91.5 & 95.0 & 95.7 & 95.0 & 92.2 & 92.2 & 95.7 & 95.7 \\
& Conflicting Ctx. & 10.6 & 14.9 & 23.4 &  5.0 &  2.8 & 18.4 & 29.8 & 24.8 & 22.8 & 14.9 & 31.2 & 36.9 & 34.8 & 14.9 & 16.3 & 36.9 & 41.8 & 45.4 & 18.4 & 18.4 & 36.9 & 46.1 & 43.3 & 27.7 & 24.8 \\
\midrule

\multirow{3}{*}{Economics}
& No Ctx.          & 30.3 & 34.9 & 36.8 & 31.6 & 30.3 & 51.3 & 47.4 & 48.0 & 44.1 & 43.4 & 61.8 & 58.6 & 57.2 & 55.9 & 55.9 & 66.4 & 63.8 & 63.2 & 63.2 & 60.5 & 74.3 & 72.4 & 73.0 & 70.4 & 67.1 \\
& Supporting Ctx.  & 53.9 & 43.4 & 39.5 & 63.8 & 59.2 & 69.7 & 66.4 & 58.6 & 74.3 & 71.7 & 80.9 & 77.6 & 75.7 & 81.6 & 80.9 & 82.9 & 75.0 & 75.0 & 81.6 & 82.2 & 83.6 & 84.2 & 84.9 & 84.9 & 84.9 \\
& Conflicting Ctx. & 19.7 & 32.2 & 27.0 & 17.8 & 14.5 & 36.8 & 36.2 & 38.2 & 26.3 & 24.3 & 51.3 & 50.0 & 48.0 & 38.8 & 34.2 & 58.6 & 50.7 & 50.0 & 40.1 & 38.8 & 57.2 & 56.6 & 55.9 & 48.7 & 47.4 \\
\midrule

\multirow{3}{*}{Psychology}
& No Ctx.          & 27.3 & 43.2 & 35.3 & 35.3 & 32.4 & 46.0 & 51.8 & 49.6 & 50.4 & 46.0 & 58.3 & 62.6 & 63.3 & 61.9 & 57.6 & 70.5 & 66.9 & 66.9 & 64.7 & 65.5 & 68.3 & 66.9 & 71.2 & 66.9 & 61.2 \\
& Supporting Ctx.  & 71.9 & 60.4 & 67.6 & 82.0 & 81.3 & 81.3 & 79.9 & 80.6 & 88.5 & 84.2 & 88.5 & 91.4 & 88.5 & 92.1 & 91.4 & 89.2 & 89.9 & 93.5 & 94.2 & 95.0 & 86.3 & 91.4 & 90.6 & 92.8 & 92.1 \\
& Conflicting Ctx. & 20.1 & 35.3 & 32.4 & 12.9 & 12.2 & 31.7 & 41.7 & 43.2 & 34.5 & 26.6 & 41.7 & 51.1 & 50.4 & 30.9 & 33.8 & 39.6 & 48.9 & 50.4 & 31.7 & 33.1 & 40.3 & 49.6 & 48.9 & 38.8 & 37.4 \\
\midrule

\multirow{3}{*}{Average}
& No Ctx.          & 23.4 & 32.3 & 28.6 & 26.1 & 23.2 & 39.7 & 40.8 & 40.6 & 39.3 & 37.0 & 50.0 & 55.8 & 54.4 & 51.4 & 49.1 & 57.9 & 57.6 & 57.6 & 56.4 & 55.9 & 63.5 & 64.1 & 64.7 & 62.3 & 58.4 \\
& Supporting Ctx.  & 60.7 & 49.7 & 47.7 & 72.9 & 73.8 & 73.9 & 68.3 & 68.5 & 81.2 & 79.6 & 82.2 & 81.3 & 81.2 & 87.6 & 87.8 & 85.6 & 83.5 & 84.1 & 88.7 & 89.7 & 85.4 & 86.6 & 86.3 & 89.8 & 89.8 \\
& Conflicting Ctx. & 15.3 & 26.9 & 25.6 & 11.0 &  8.9 & 25.0 & 32.2 & 32.2 & 22.6 & 17.7 & 34.6 & 40.0 & 39.0 & 22.2 & 22.3 & 37.3 & 39.7 & 41.5 & 23.6 & 23.5 & 36.9 & 43.4 & 43.0 & 30.2 & 28.5 \\
\bottomrule
\end{tabular}%
}
\caption{Performance across domains, context conditions, and retrieval settings for Qwen3 models. Average denotes the mean over Law, Health, Economics, and Psychology.}
\label{tab:full_sft_res}
\end{table*}

\subsection{Solution Complexity Full Results}
\label{app_comp_full_results}

\begin{figure}[h]
    \centering
    \includegraphics[width=\linewidth]{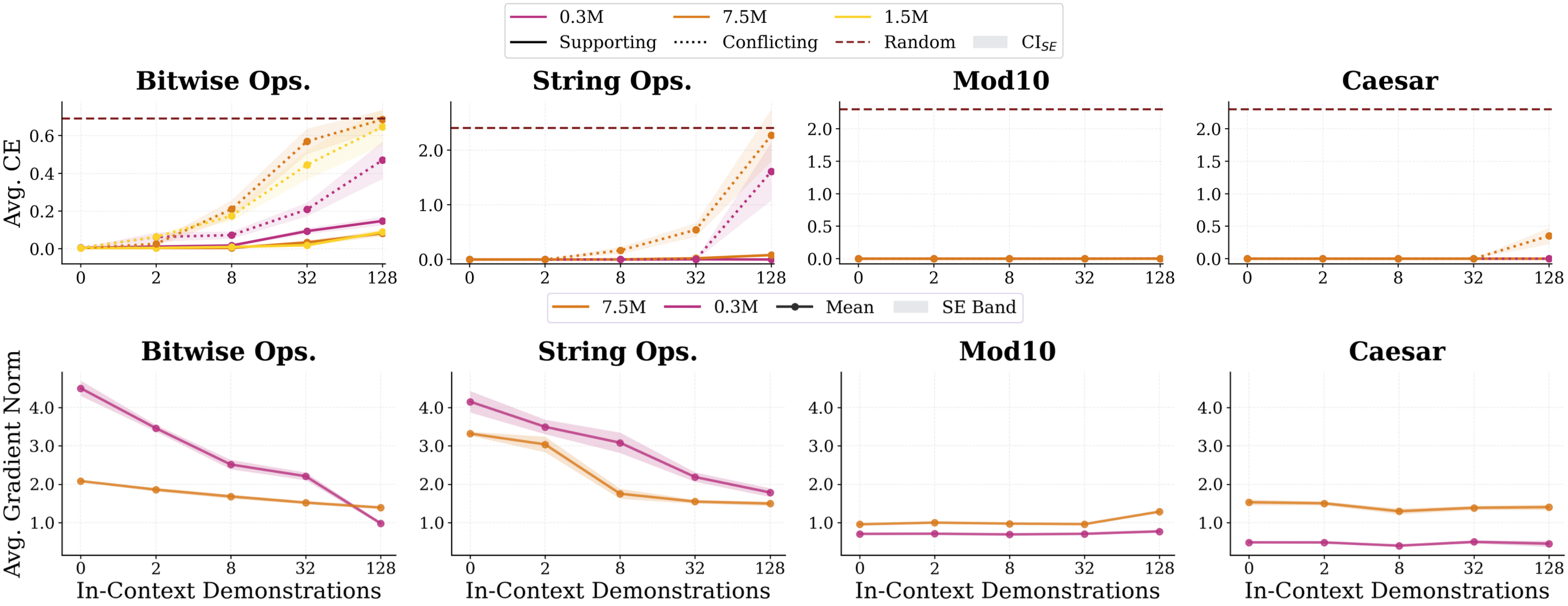}
    \caption{\textbf{Lower complexity solutions and context addiction co-occur across model sizes.}
    Across synthetic tasks and model scales, tasks with larger supporting--conflicting gaps also show lower average training gradient norms as the number of in-context demonstrations increases. This pattern is strongest for bitwise and string operations, while mod10 arithmetic and Caesar cipher remain comparatively stable. Context addiction becomes stronger as model size grows.}
    \label{fig_complexity_full_res}
\end{figure}

\autoref{fig_complexity_full_res} provides the full model-scale breakdown for the
complexity analysis (\S\ref{sec_solution_complexity}). The same qualitative pattern holds across
model sizes. Tasks that exhibit stronger context addiction also show decreasing
average training gradient norms as train-time demonstrations increase. For
bitwise and string operations, longer contexts produce lower gradient norm
solutions and larger supporting--conflicting gaps. In contrast, mod10 arithmetic and Caesar
cipher show comparatively stable gaps and no consistent decrease in the
gradient norm proxy.

The effect also strengthens with model size. Larger models show larger
growth in the supporting--conflicting gap under longer train-time context. This
suggests that increased capacity does not prevent context addiction in these
settings. Instead, when demonstrations provide a lower complexity route to
reducing loss, larger models appear even more able to exploit that contextual
solution. These results support our conclusion that lower complexity
solutions and context addiction emerge together, and show that this relationship
is consistent across model scales.

\subsection{Token-Level Attention Allocation Full Results}
\label{app_attn_all_full_results}

In this section, we provide additional inference-time attention allocation results for other Qwen3 model sizes. As in \autoref{fig:attn_mass_qwen1}, we compare models
fine-tuned without context against models fine-tuned with $k=8$ task-relevant
documents, and evaluate both under supporting context. The same qualitative
pattern appears for Qwen3-0.6B in \autoref{fig:attn_mass_qwen3_0_6},
Qwen3-8B in \autoref{fig:attn_mass_qwen3_8}, and Qwen3-14B in
\autoref{fig:attn_mass_qwen3_14}: task-relevant fine-tuning increases
attention mass on context tokens, with the largest differences appearing in
middle layers. These results support
the conclusion that context-rich training changes inference-time computation by
making models rely more strongly on supplied context.

\begin{figure}[h]
    \centering
    \includegraphics[width=\linewidth]{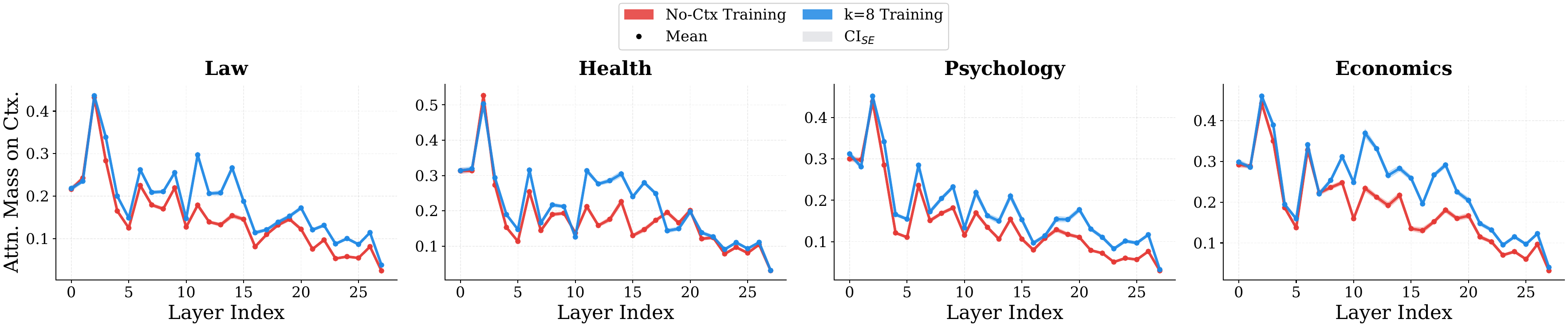}
    \caption{\textbf{Attention allocation for Qwen3-0.6B.} Fine-tuning with task-relevant context increases attention mass on context tokens, especially in middle layers.}
    \label{fig:attn_mass_qwen3_0_6}
\end{figure}

\begin{figure}[h]
    \centering
    \includegraphics[width=\linewidth]{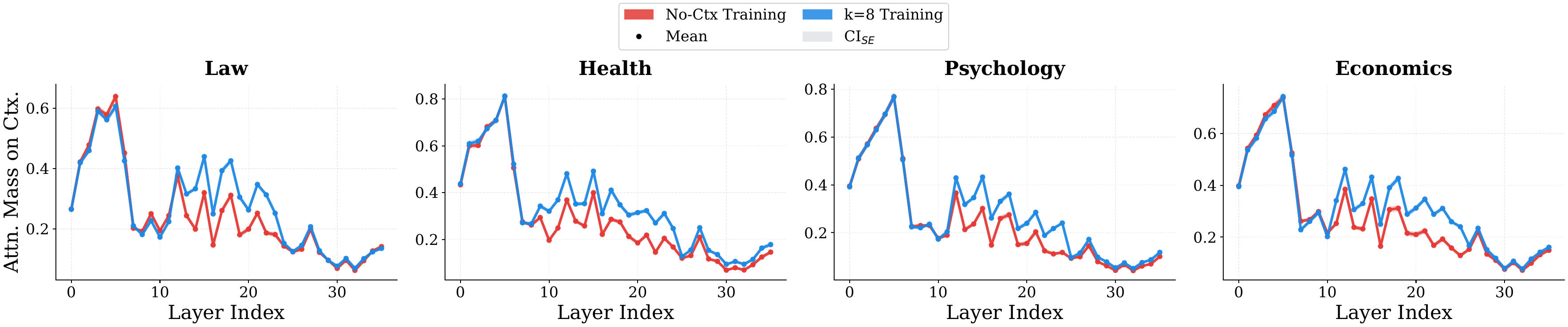}
    \caption{\textbf{Attention allocation for Qwen3-8B.} The context-trained model assigns more attention to context tokens than the no-context model under supporting-context evaluation.}
    \label{fig:attn_mass_qwen3_8}
\end{figure}

\begin{figure}[h]
    \centering
    \includegraphics[width=\linewidth]{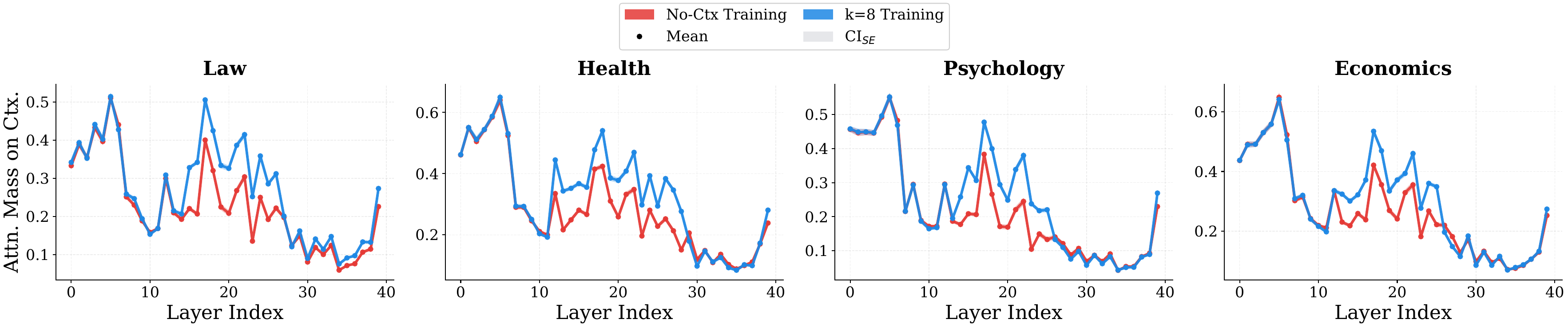}
    \caption{\textbf{Attention allocation for Qwen3-14B.} Task-relevant fine-tuning shifts inference-time attention toward supplied context across tasks, with strongest effects in middle layers.}
    \label{fig:attn_mass_qwen3_14}
\end{figure}

\clearpage

\section{Proof of Parametric Information Monotonicity}
\label{app_theory_details}

Let \(\ell\) denote the pointwise task loss and
\(\Risk_k(\Pi,q_k)=\mathbb E[\ell(Y,\hat Y)]\) denote the corresponding
population risk. The risk threshold \(\rho\) denotes an upper bound on this
population risk. We reuse the notation from \S\ref{sec_theory}, where
\(X^{(k)}\) denotes the input available at context size \(k\),
\(T_{k,m}\) is the measurable projection from a longer input \(X^{(m)}\) to its
aligned \(k\)-token subwindow, \(\Pi(\cdot\mid\tau)\) is the
task-conditional weight channel, and \(q_k\) is the predictor at context size \(k\), inducing predictions \(\hat Y\sim q_k(\cdot\mid X^{(k)},W)\).

\begin{proof}
Fix \(k<m\), and take any feasible pair \((\Pi,q_k)\) for
\(\mathcal I_k(\rho)\), so that
\[
\Risk_k(\Pi,q_k)\le \rho.
\]
We construct a feasible pair at context size \(m\) by using the same weight channel
and defining the longer-context predictor by first projecting the input to the
corresponding \(k\)-token subwindow:
\[
q_m(\hat y\mid X^{(m)},W)
=
q_k(\hat y\mid T_{k,m}(X^{(m)}),W).
\]
By the nested-input assumption, \(X^{(k)}=T_{k,m}(X^{(m)})\) almost surely.
Therefore, under the constructed pair \((\Pi,q_m)\), the induced joint law of
\((X^{(k)},Y,W,\hat Y)\) is the same as under \((\Pi,q_k)\). Hence the
population risk is preserved:
\[
\Risk_m(\Pi,q_m)=\Risk_k(\Pi,q_k)\le \rho.
\]
Moreover, because the weight channel is unchanged, the task information stored
in the weights is also unchanged. Thus every feasible pair at context size \(k\) induces a feasible pair at context
size \(m\) with the same parametric information. Taking the infimum over all
feasible \((\Pi,q_k)\) gives
\[
\mathcal I_m(\rho)\le \mathcal I_k(\rho).
\]
\vspace{-1em}
\end{proof}

\section{Language Modeling as Compression}
\label{app_add_rel_work}
The idea that prediction, compression, and intelligence are linked predates
modern LMs. Classical source coding connects probabilistic prediction to
lossless coding~\citep{shannon1948mathematics}, while the
MDL frames learning as compression without overfitting~\citep{rissanen1978modeling,barron1998minimum}. This view also motivates compression-based evaluations of AI models, from text-compression tests as intelligence benchmarks~\citep{mahoney1999text} to the Hutter Prize's Wikipedia-compression objective~\citep{hutter2006prize}. In machine
learning, description length and prequential coding have been used to measure
what neural models learn~\citep{hinton1993keeping,blier2018description} and how
efficiently representations support downstream labels~\citep{
voita2020information,bornschein2022sequential}.
Recent work revisits these ideas for modern language models. 
\citet{deletang2024language} demonstrate that autoregressive LMs can be paired
with entropy coding to form strong general-purpose compressors, and use this lens
to study scaling, tokenization, and in-context learning. Compression also serves as an empirical diagnostic, correlating with downstream performance~\citep{huang2024compression} and guiding data selection~\citep{yin2024entropy}. Relatedly, \citet{elmoznino2024incontext}
interprets in-context learning through prequential
coding, showing that next token prediction can favor simple predictors inferred
from examples in the prompt. Practical neural compressors similarly build on LM predictions, including LLMZip~\citep{
valmeekam2023llmzip}, FineZip~\citep{mittu2024finezip}, and low-complexity
learned compressors such as L3TC~\citep{zhang2025l3tc}. These works establish
compression as an evaluation criterion, a data-selection signal, or a mechanism
for building compressors. Our use of compression is different, since we ask how the
availability of context during training changes the \emph{location} of compressed
task information. Rather than measuring whether a model compresses well overall,
we study whether longer train-time contexts shift predictive structure away from
parametric storage and toward context-conditioned computation.



\end{document}